\icmltitlerunning{Representation Learning for Frequent Subgraph Mining}
\newcommand{\andrew}[1]{{{\textcolor{teal}{[Andrew: #1]}}}}
\newcommand{\rex}[1]{{{\textcolor{magenta}{[Rex: #1]}}}}
\newcommand{\jure}[1]{{{\textcolor{red}{[Jure: #1]}}}}
\newcommand{\xhdr}[1]{{\noindent\bfseries #1}.}
\newcommand{\CITE}{{\textcolor{red}{[CITE]}}}
\newcommand{\cut}[1]{}
\newtheorem{proposition}{Proposition}
\newtheorem{definition}{Definition}
\newtheorem{problem}{Problem}
\newcommand{\cuthide}[1]{}
\newcommand{\name}{SPMiner\xspace}
\def\eqref#1{equation~\ref{#1}}
\def\1{\bm{1}}
\DeclareMathAlphabet{\mathsfit}{\encodingdefault}{\sfdefault}{m}{sl}
\SetMathAlphabet{\mathsfit}{bold}{\encodingdefault}{\sfdefault}{bx}{n}
\DeclareMathOperator*{\argmax}{arg\,max}
\DeclareMathOperator*{\argmin}{arg\,min}
\newcommand{\norm}[1]{\left\lVert#1\right\rVert}
\begin{document}

\twocolumn[
\icmltitle{Representation Learning for Frequent Subgraph Mining}



\begin{icmlauthorlist}
\icmlauthor{Rex Ying}{yale}
\icmlauthor{Tianyu Fu}{tsinghua}
\icmlauthor{Andrew Wang}{stanford}
\icmlauthor{Jiaxuan You}{uiuc}
\icmlauthor{Yu Wang}{tsinghua}
\icmlauthor{Jure Leskovec}{stanford}
\end{icmlauthorlist}

\icmlaffiliation{yale}{Department of Computer Science, Yale University, New Haven, USA}
\icmlaffiliation{tsinghua}{Department of Electronic Engineering, Tsinghua University, Beijing, China}
\icmlaffiliation{stanford}{Department of Computer Science, Stanford University, Stanford, USA}
\icmlaffiliation{uiuc}{Department of Computer Science, University of Illinois at Urbana-Champaign, Urbana, USA}

\icmlcorrespondingauthor{Rex Ying}{rex.ying@yale.edu}

\icmlkeywords{Machine Learning, ICML}

\vskip 0.3in
]



\printAffiliationsAndNotice{}  

\begin{abstract}

Identifying frequent subgraphs, also called {\em network motifs}, is crucial in analyzing and predicting properties of real-world networks.
However, finding large commonly-occurring motifs remains a challenging problem not only due to its NP-hard subroutine of subgraph counting, but also the exponential growth of the number of possible subgraphs patterns.
Here we present {\em Subgraph Pattern Miner (\name)}, a novel neural approach for approximately finding frequent subgraphs in a large target graph. \name combines graph neural networks, order embedding space, and an efficient search strategy to identify network subgraph patterns that appear most frequently in the target graph.
\name first decomposes the target graph into many overlapping subgraphs and then encodes each subgraph into an order embedding space. \name then uses a monotonic walk in the order embedding space to identify frequent motifs. 
Compared to existing approaches and possible neural alternatives, \name is more accurate, faster, and more scalable.
For 5- and 6-node motifs, we show that \name can almost perfectly identify the most frequent motifs while being 100x faster than exact enumeration methods. 
In addition, \name can also reliably identify frequent 10-node motifs, which is well beyond the size limit of exact enumeration approaches. And last, we show that \name can find large up to 20 node motifs with 10-100x higher frequency than those found by current approximate methods.

\cut{
Identifying frequent subgraph structures have been crucial in analyzing and predicting properties of graphs.  
However, finding large commonly-occurring subgraphs remains an open problem due to its NP-hard subroutine of subgraph counting, and the combinatorial growth of the number of possible subgraphs with their size. 
Here we present GMiner, a novel neural approach to mining frequent subgraphs. GMiner integrates graph neural networks, order embedding space, and an efficient discrete search strategy to reliably identify subgraph patterns that appear most frequently in a target graph dataset. 
GMiner uses an expressive encoder to map graphs to order embeddings, and a pattern generator to identify frequent subgraphs using monotonic walks in the order embedding space.
Compared to existing state-of-the-art approaches and possible neural alternatives, GMiner is more accurate, faster, and more scalable.
For small subgraph sizes where exact groundtruth can be computed, we show that GMiner identifies subgraphs with frequency within 10\% of the groundtruth subgraph frequency. In addition, GMiner can reliably identify synthetically planted 10-node subgraphs, which is well beyond the size limit of present approaches.
We further test GMiner on real-world datasets, and show that it can find large-sized subgraphs that appear 100x more frequently than those subgraphs found by baselines.
\jure{what does this last sentence mean? clarify?}
}
\end{abstract}


\section{Introduction}
\label{sec:intro}
Finding frequently recurring subgraph patterns or {\em network motifs} in a graph dataset is important for understanding structural properties of complex networks \cite{benson2016higher}.
Frequent subgraph mining is a challenging but important task in network science: given a target graph or a collection of graphs, it aims to discover subgraphs or motifs that occur frequently in this data \cite{kuramochi2004efficient}.
In biology, subgraph counting is highly predictive for disease pathways, gene interaction and connectomes \cite{agrawal2018large}. In social science, subgraph patterns have been observed to be indicators of social balance and status \cite{leskovec2010signed}. In chemistry, common substructures are essential for predicting molecular properties \cite{cereto2015molecular}. 

However, frequent subgraph mining has extremely high computational complexity. A traditional approach to motif mining is to enumerate all possible motifs $Q$ of size up to $k$, usually $k\le 5$, and then count appearances of each $Q$ in a given graph \cite{hovcevar2014combinatorial}. This is problematic as the number of motifs $Q$ increases super exponentially with their size $k$, and counting the number of occurrences of a single motif $Q$ in the target graph $G$ is  NP-hard by itself~\cite{ribeiro2021survey}. 

More recently, neural approaches to learning combinatorially-hard graph problems, such as prediction of edit distance \cite{bai2019simgnn}, graph isomorphism \cite{fey2020deep,guo2018neural,li2019graph,xu2018how}, pairwise maximum common subgraphs \cite{bai2020neural} and substructure counting \cite{chen2020can, Liu2020NeuralSI}, have been explored.
Most recently, NeuroMatch~\cite{neuralsubgraphmatching} focuses on subgraph isomorphism testing, which attempts to predict whether a single query motif $Q$ is a subgraph of a large target graph $G$. While subgraph isomorphism is an important subroutine, frequent subgraph counting is much more challenging because it requires solving two intractable search problems: (i) counting the frequency of a given motif $Q$ in $G$; (ii) searching over all possible motifs to identify the ones with the highest frequency. Problem (i) is NP-hard; Problem (ii) is also hard because the number of possible graphs $Q$ increases super-exponentially with their size.

\begin{figure*}[t!]
    \centering
    \begin{tabular}[c]{cc}
    \begin{subfigure}[t]{0.5\textwidth}
        \centering
        \includegraphics[width=\textwidth]{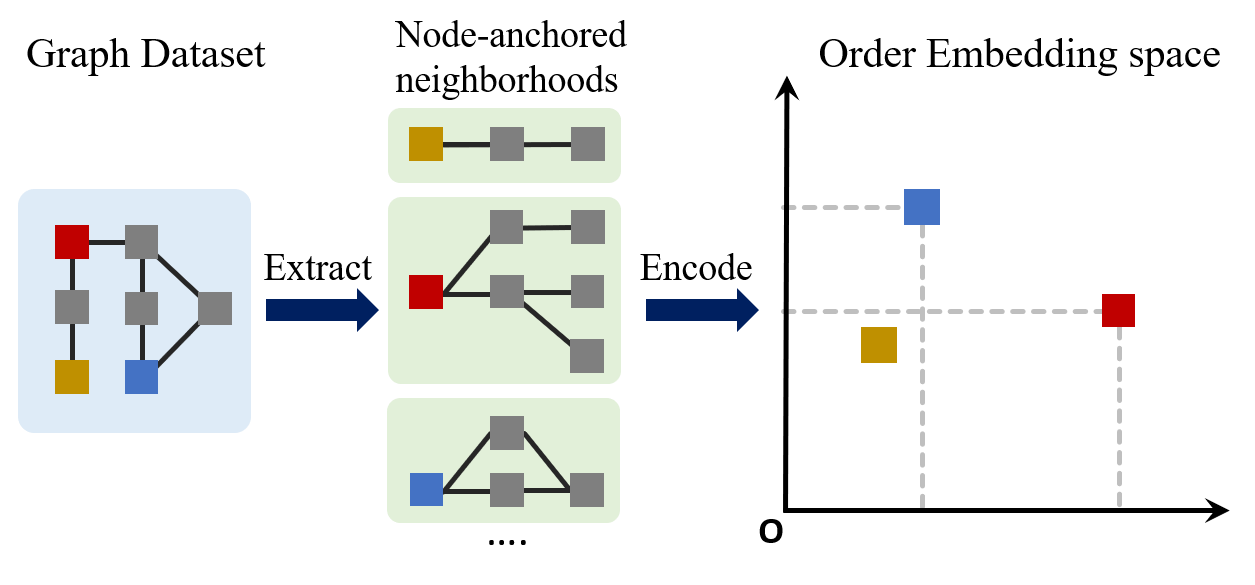} 
        \caption{\name Encoder}
    \end{subfigure} &
    \begin{subfigure}[t]{0.42\textwidth}
        \centering
        \includegraphics[width=\textwidth]{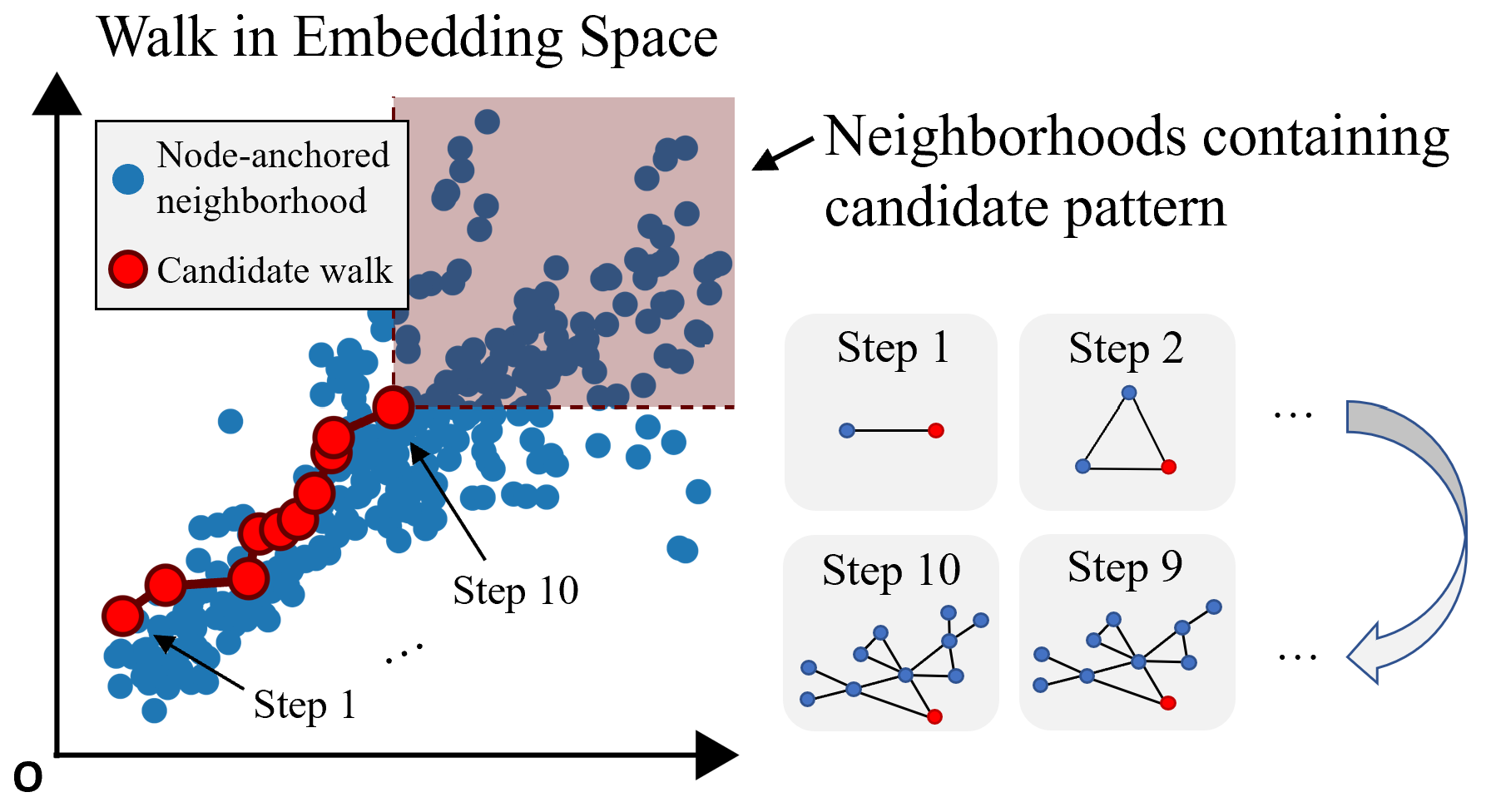}
        \caption{\name Decoder}
    \end{subfigure}%
    \end{tabular}
    \caption{\name encoder (a) and \name motif search procedure (b). (a) The \name decomposes a dataset into many node-anchored neighborhoods, and maps each neighborhood into a point in the embedding space such that order embedding property is preserved: if neighborhood $A$ is a subgraph of neighborhood $B$ then $A$ is embedded to the lower left of $B$. Here yellow node-anchored neighborhood is a  subgraph of both blue and red neighborhoods, so it is embedding to the lower left of both of them. %
    (b) \name then starts with an empty graph and iteratively adds nodes and edges to it to find frequent motifs. \name performs a monotonic walk in the order embedding space to identify a motif that is a subgraph of many neighborhoods. The walk in red represents growing of a frequent motif. Key insight here is that \name can quickly count the number of occurrences of a given motif by simply checking the number of neighborhoods (points) that are embedded to the top-right of it (denoted with a shaded region).
    }
    \label{fig:encoder_decoder}
\end{figure*}


Here we propose {\em Subgraph Pattern Miner (\name)}, a general framework using graph representation learning for identifying large frequent motifs in a large target graph. To the best of our knowledge, \name is the first neural approach to mining frequent subgraphs.

\name consists of two steps (Figure~\ref{fig:encoder_decoder}): {\bf (1) Embedding Candidate Subgraphs:}
\name decompose the input graph into overlapping node-anchored neighborhoods around each node. It then
uses an expressive graph neural network (GNN) to embed these neighborhoods to points in an \emph{order embedding} space. The order embedding space is trained to enforce the property that if one graph is a subgraph of another, then they are embedded to the ``lower left'' of each other (Figure \ref{fig:encoder_decoder} (a)). Hence the order embedding space captures the partial ordering of graphs under the subgraph relation. Importantly the GNN only needs to be trained once (using synthetic data) and then can be apply to {\em any} input graph.
{\bf (2)Motif Search Procedure:}
\name then directly reasons in the embedding space to identify frequent motifs of desired size $k$. \name searches for a \cut{discrete} $k$-step walk in the embedding space that stays to the lower left of as many neighborhoods (blue dots) as possible (Figure \ref{fig:encoder_decoder} (b)). The walk is performed by iteratively adding nodes and edges to the current motif candidate, and tracking its embedding\cut{ in the embedding space}. The key insight here is that \name can quickly count the number of occurrences of a given motif by simply checking the number of neighborhoods (points) that are embedded to the top-right of it in the embedding space.

\xhdr{Evaluation}
We carefully design an evaluation framework to evaluate performance of \name.
Current exact combinatorial frequent subgraph mining techniques only scale to motifs of up to 6 nodes. So, we first show that for 5- and 6-node motifs (where their exact count can be obtained), \name correctly identifies most of the top 10 most frequent motifs while being 100x faster than the exact enumeration. 
As present exact enumeration methods do not scale beyond motifs of size 6, we then generate synthetic graphs with planted frequent motifs of size 10, and again show that \name is able to robustly identify them. Last, we also compare \name to approximate methods for finding large motifs, and show that \name is able to identify large motifs that are 10-100x more frequent than those identified by current methods.


Overall, there are several benefits of our approach: robustness, accuracy and speed. In particular, \textbf{(a)} \name avoids expensive combinatorial graph matching and counting by mapping the problem into an embedding space; \textbf{(b)} \name allows for a neural model to estimate frequency of any motif $Q$ directly in the order embedding space; \textbf{(c)} Training only needs to be performed once on a proposed large synthetic dataset, and then \name can be applied to any graph dataset; \textbf{(d)} The embedding space can be efficiently navigated in order to identify large motifs with high frequency.

\section{Related Work}

Existing approaches on subgraph mining include two lines of work: subgraph counting and frequent subgraph mining.

\xhdr{Subgraph counting} There have been works in counting a query motif in a target graph. Hand-crafted schemes have been successful for small motifs (up to 5 nodes)~\cite{hovcevar2014combinatorial, jha2015path}, and approaches like statistical sampling~\cite{kashtan2004efficient}, random walk~\cite{chen2016general}, and redundancy elimination~\cite{shi2020graphpi, mawhirter2021graphzero} have been applied. 
However, these methods are unscalable for the subgraph mining task, since they would require intractable enumeration of all possible motifs of a given size.

\xhdr{Non-ML approaches to frequent subgraph mining}
\name aims to solve the problem of frequent subgraph mining, which is much more challenging than subgraph counting as it involves searching for the most frequent subgraphs.
There exist exact and approximate methods for frequent subgraph mining that involve searching the possible subgraphs by pruning or compression.
Exact methods often use order restrictions to reduce the search space\cite{yan2002gspan, kuramochi2004efficient, nijssen2005gaston} or use decomposition for small subgraphs\cite{pinar2017escape}.
Heuristic and sampling methods offer faster execution time while no longer guaranteeing that all frequent subgraphs will be found: greedy beam search~\cite{ketkar2005subdue}, pattern contraction~\cite{matsuda2000extension}, subgraph sampling~\cite{wernicke2006efficient} and color coding~\cite{bressan2019motivo, bressan2021faster} have been used, which reduces the size of the subgraph search space to scale with target graph size. 
However, these approaches scale poorly with increasing query graph size due to combinatorial growth of the sample space; and the precision degrades significantly for larger subgraphs.

\xhdr{Neural approaches}
Recently there are approaches that use graph neural networks (GNNs) to predict relations  
between graphs: models that learn to predict graph edit distance~\cite{bai2019simgnn,li2019graph}, graph isomorphism~\cite{fey2020deep,guo2018neural,xu2018how,xu2019cross}, and substructure counting~\cite{chen2020can, Liu2020NeuralSI} have been developed. Neural models have also been proposed to find maximum common subgraphs~\cite{bai2020neural} and generate prediction explanations \cite{ying2019gnnexplainer}, but only in the context of a single prediction or graph pair rather than across an entire dataset. 
Our work is inspired by the recent approach to neural subgraph matching~\cite{neuralsubgraphmatching}.
However, here we are solving a different and a much harder problem: our goal is not only to identify the frequency of a given motif $Q$ in graph $G$ but also identify motifs $Q$ that have high frequency.





\section{Proposed Method}
We first introduce subgraph mining problem and its subroutine of subgraph isomorphism, then define our objective of finding frequent motifs. 
We then introduce \name, which has two components: an encoder that maps graphs into embeddings to capture subgraph relation, and a motif search procedure that identifies motifs that appear most frequently in the graph dataset.


\subsection{Problem Setup}

Let $G_T = (V_T, E_T)$ be a large \emph{target graph} where we aim to identify common motifs, with vertices $V_T$ and edges $E_T$. 
For notational simplicity, we consider the case of a single target graph. A dataset of multiple graphs could be considered as a large target graph with multiple disconnected components.

\begin{figure}[t]
    \centering
    \includegraphics[width=0.3\textwidth]{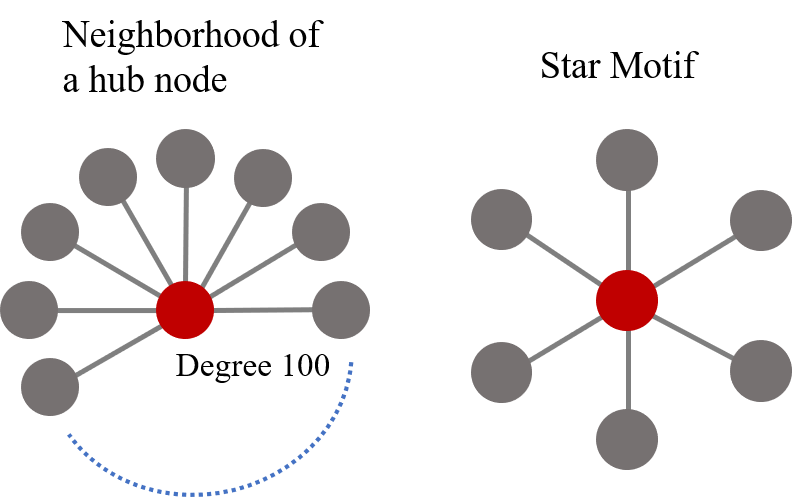} 
    \caption{Distinction between Node-anchored and Graph-level subgraph frequency. Consider a hub node with degree 100 (left). We aim to determine frequency of the star motif (right). Definition 1 with center anchor results in a count of $1$. In contrast, Definition 2 counts in ${100\choose 6}$ motif occurrences.}
    \label{fig:motif_freq_example}
\end{figure}

Analogously, let $G_Q = (V_Q, E_Q)$ be a \emph{query motif}. 
The \textit{subgraph isomorphism problem} is to determine whether an isomorphic copy of the query graph $G_Q$ appears as a subgraph of the target graph $G_T$. Formally, $G_Q$ is subgraph isomorphic to $G_T$ if there exists 
%
%
%
an injection $f: V_Q \mapsto V_T$ such that $(f(v), f(u)) \in E_T \iff (v, u) \in E_Q$ (all corresponding edges match). 
The function $f$ is a \textit{subgraph isomorphism mapping}. Following most previous motif mining literature, we focus on mining \textit{node-induced} subgraphs following many previous works~\cite{akihiro2000apriori,kuramochi2004efficient,yan2002gspan}, where subgraph edges are induced by the subset of nodes. However our method can also be applied to the variation of mining edge-induced subgraphs\footnote{The only change is to adjust the training set to sample edge-induced subgraph pairs instead of node-induced subgraph pairs in Section~\ref{sec:encoder}.}.

The problem of \textit{frequent subgraph mining} is to identify subgraph patterns (i.e., motifs) that appear most frequently in a given dataset $G_T$. We focus on the case of finding \textit{node-anchored} motifs $(G_Q, v)$ \cite{benson2016higher,hovcevar2014combinatorial}.
Throughout, we say that $G_Q$ anchored at $v \in V_Q$ is a subgraph of $G_T$ anchored at $u \in V_T$ if there exists a subgraph isomorphism $f: V_Q \mapsto V_T$ satisfying $f(v) = u$.
Our goal is to find most frequent motifs $G_Q$ with associated \textit{anchor nodes} $v \in V_Q$.
We define the frequency of a node-anchored subgraph pattern as:

\begin{definition}{\textbf{Node-anchored Subgraph Frequency.}}
\label{def:motif_freq}
Let $(G_Q, v)$ be a node-anchored subgraph pattern. The frequency of motif $G_Q$ in the graph dataset $G_T$, relative to anchor node $v$, is the number of nodes $u$ in $G_T$ for which there exists a subgraph isomorphism $f: V_Q \mapsto V_T$ such that $f(v) = u$~\footnote{Note that standard subgraph isomorphism can be reduced to anchored subgraph isomorphism by adding an auxiliary anchor node to both the query and target graph that is connected to all existing nodes.}.
\end{definition}

We remark that Definition 1 is the Minimum Image Support frequency \cite{bringmann2008frequent} or graph transaction based frequency\cite{Jiang2012ASO} from existing literature, extended to the node-anchored setting. 
Although we focus on the node-anchored frequency definition in designing methods, in experiments we additionally evaluate performance with the alternative graph-level frequency definition, and show that \name works under both definitions.
\begin{definition}{\textbf{Graph-level Subgraph Frequency.}}
\label{def:motif_freq_graph}
Let $G_Q$ be a subgraph pattern. The graph-level frequency of $G_Q$ in $G_T$ is the number of unique subsets of nodes $S \subset V_T$ for which there exists a subgraph isomorphism $f: V_Q \mapsto V_T$ whose image is $S$.
\end{definition}


Compared with graph-level frequency, the node-anchored definition is 1) robust to outliers, 2) provides more holistic understanding of subgraphs, and 3) satisfies the Downward Closure Property (DCP). 
Fig. \ref{fig:motif_freq_example} shows that a highly self-symmetric pattern. It occurs combinatorially many times in the neighborhood under Definition 2, which gives an outlier count that overshadows the counts of other important neighborhoods in the target graph. 
Choosing the center and peripheral nodes as the anchor node under Definition 1 will result in different counts (1 and 100). Different counts based on different anchor nodes of the same subgraph holistically describe the frequency of appearances of nodes of different roles in the pattern.
Furthermore, Definition 1 has an important Downward Closure Property (DCP), which is valued by previous works\cite{yan2002gspan, nijssen2005gaston}. DCP bounds the count of large query by that of its subgraphs. So the number of center-anchored 6-star is no larger than center-anchored 5-star, 4-star, etc. 

The goal of \name is to identify subgraphs of maximum frequency:


\begin{problem}{\textbf{Goal of \name.}}
Given a target $G_T$, a motif size parameter $k$ and desired number of results $r$, the goal of \name is to identify, among all possible graphs on $k$ nodes, the $r$ graphs (i.e, motifs) with the highest frequency in $G_T$. 
\end{problem}

Note that the anchors do \emph{not} need to be specified by the user. Instead, the frequent motif output by the mining algorithm contain an anchor node (see Figure~\ref{fig:motif_freq_example} for example). Depending on downstream applications, the anchor information can be ignored. If multiple frequent motifs are needed, graph isomorphism test (a much easier task) can be used to de-duplicate predicted top-K frequent motif patterns that have different anchors but are isomorphic.


\subsection{\name Encoder $\phi$: Embedding Candidate Subgraphs}
\label{sec:encoder}

We first provide a high-level overview of subgraph encoding, consisting of two steps:
(1) Given $G_T$, we decompose it by extracting $k$-hop neighborhoods $G_v$ (Definition~\ref{def:neighborhood}) anchored at each node $v$.
In \name the neighborhoods are extracted via breadth-first search procedure.
(2) The encoder $\phi$ is a graph neural network (GNN) to map the neighborhoods $G_v$ into an order embedding space (Figure~\ref{fig:encoder_decoder}).
\begin{definition}{\textbf{Neighborhood.}}
The $k$-hop neighborhood anchored at node $v$ contains all nodes that have shortest path length at most $k$ to node $v$.
\label{def:neighborhood}
\end{definition}

\cut{
\jure{we have to give a high-level overview of the encoding and reasoning behind it:\\
1) take big $G_T$\\
2) decompose it into node-anchored subgraphs (of size at least $k$)\\
3) embed subgraphs by using a GNN centered at the node anchor (not this is different than how graphs are generally embedded (using an average of individual node embeddings)\\
4) GNN is trained to capture subgraph relations (order embedding)\\
This is the core of the method. Everything else are details.
\\
\\
It is important to establish duality or correspondence between the node
}}


\xhdr{Mapping node-anchored neighborhood to embedding space}
Subgraph Frequency Definition~\ref{def:motif_freq} uses the concept of node-anchored subgraph isomorphism.
We use a categorical node feature to represent whether a node is an anchor $v$ of a neighborhood graph $G_v$, embed it by computing node embeddings of $G_v$ through a GNN and aggregate into the neighborhood embedding by sum pooling. 

\xhdr{Order embedding space}
Order embedding \cite{vendrov2016order} is a representation learning technique that uses the geometric relations of embeddings to model a partial ordering structure. Order embeddings are a natural way to model subgraph relations because subgraph isomorphism induces a partial ordering on the set of all graphs via 
its properties of \emph{transitivity} and \emph{antisymmetry}. 

\begin{figure}[tb]
    \includegraphics[width=0.45\textwidth]{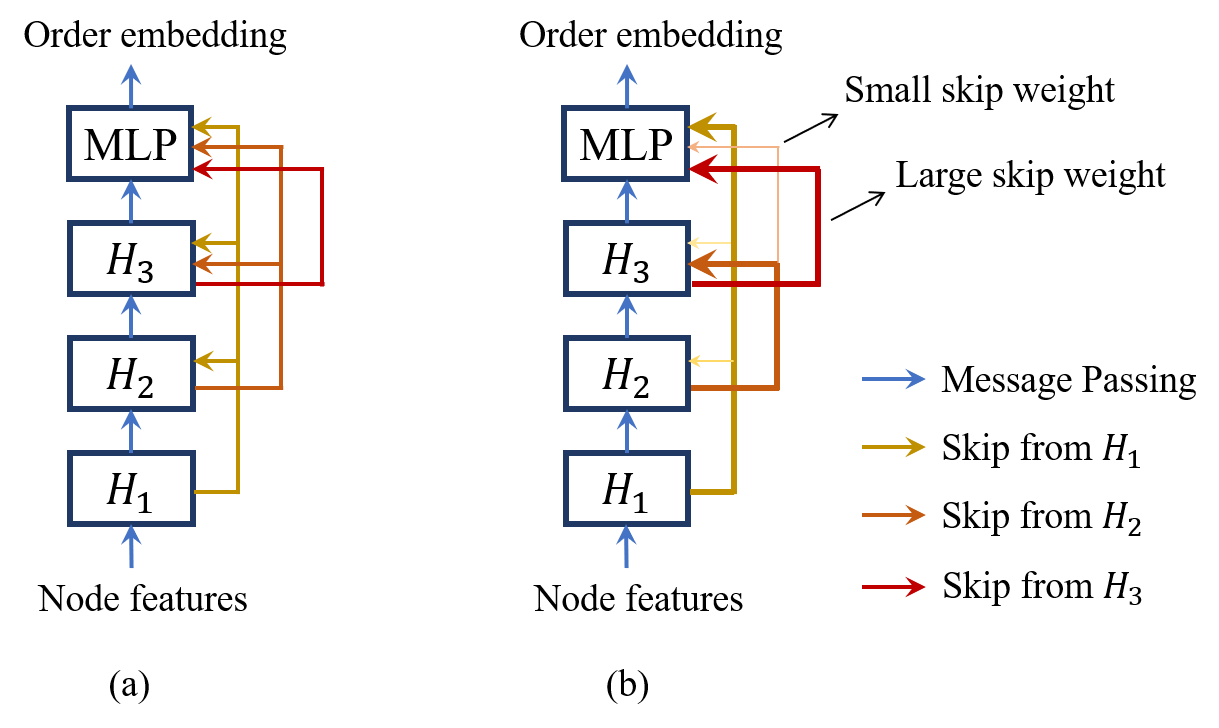} 
    \vspace{-2mm}
    \caption{\name\ Learnable skip layer. (a) Initially all skip connections are assigned equal weights. (b) After training the learnable skip GNN, the model learns the best skip connection configurations that encode subgraph relations. The architecture only requires $O(L^2)$ additional paramters. }
    \label{fig:skip}
\end{figure}

Formally, we define a partial order $\preceq$ on the set of all graphs $\mathcal{G}$. Let $A, B \in \mathcal{G}$, and denote $A \preceq B$ if graph $A$ is isomorphic to a subgraph of $B$. We then assume we are given an embedding function $\phi: \mathcal{G} \mapsto \mathbb{R}^n$ that maps graphs to vectors, enforcing the \emph{order embedding constraint} that $A \preceq B$ if and only if $\phi(A) \leq \phi(B)$ elementwise. In other words, embedding $\phi(A)$ is to the ``lower left'' of embedding $\phi(B)$. 

The key property of the space is that \name can quickly count the number of occurrences of a given motif by simply counting the number of points in the embedding space (i.e., node-anchored neighborhoods) that are to the top-right of the motif's embedding.

\xhdr{\name Graph Neural Network}
\name  uses a Graph Neural Network (GNN) to learn an embedding function $\phi$, which maps node-anchored neighborhoods into points in the embedding space such that the subgraph property is preserved. Importantly, \name  GNN is only trained once and then can be applied to any target graph from any domain. This is due to the fact, that GNN needs to learn to map different subgraphs to different points in the embedding space and once it is trained it can be applied. This means that an application of \name to a new graph does not require any training.

To train the \name GNN we need to define a loss function and then optimize parameters of the GNN to minimize the loss. The \emph{order embedding penalty} between two graphs $A$ and $B$ (where $A$ is a subgraph of $B$) is defined as:
\begin{equation}
\label{eq:order_penalty}
    E(A, B) = \norm{\max(0, \phi(A) - \phi(B))}^2.
\end{equation}
We call $E(A, B)$ the \textit{penalty}. To enforce the order embedding constraint when training the \name GNN, we use this penalty in the following max-margin loss:
\begin{equation}
\label{eq:order_loss}
    \sum_{(A, B) \in P} E(A, B) + \sum_{(A', B') \in N} \max(0, \alpha - E(A', B'))
\end{equation}
Here, $P$ is the set of positive examples (pairs $A, B$ where $A$ is a subgraph of $B$) and $N$ is the set of negative examples (pairs that do not satisfy the subgraph relation); $\alpha$ is the margin hyperparameter which controls the separation between penalties of positive and negative examples.

Observe that at the query time, given precomputed embeddings of $A, B$, we can use $E(A, B)$ to quickly determine if $A$ is subgraph isomorphic to $B$, simply by checking if $E(A, B)$ is below a learned threshold. This is important as it allows us to test whether $A$ is a subgraph of $B$ in time linear in embedding dimension, independent of graph sizes.


\xhdr{\name GNN Architecture}
It is essential for our GNN to be expressive in capturing neighborhood structures~\cite{xu2018how}. We achieve this with a GNN of large depth. However, increasing depth can potentially degrade GNN model performance due to oversmoothing.
We propose a new approach of \emph{learnable skip layer}, based on the dense skip layers. 
Different from previous GNN skip layers \cite{hamilton2017inductive,li2019deepgcns}, we use a fully connected skip layer analogous to DenseNet~\cite{huang2017densely}, and additionally assign a learnable scalar weight $w_{i,l}$ to each skip connection from layer $i$ to $l$.
Let $1 \le l \le L$ be the layer number, and $H_l'$ be the embedding matrix at $l$-th layer after message passing.
At each layer $l$, the node embedding matrix $H_l$ is computed by:

\begin{equation}
\label{eq:skip}
    H_l = \mathrm{Concat} \left(\sum_{i=1}^{l-1} w_{i,l} H_i, H_l' \right), \ \forall l=1, 2, \ldots, L
\end{equation}

When learning embeddings for subgraph isomorphism tasks, we can view representations at layer $i$ as describing the graph structural information for the $i$-th hop neighborhood~\cite{hamilton2017inductive}. Learnable skip allows every layer of the model to easily access structural features of different sized neighborhoods. This ensures that only useful skip connections across layers are retained, and different sized subgraph components (at different layers) can be simultaneously considered.

\xhdr{Training the \name GNN}
To train the \name GNN, we generate a set of positive and negative training instances and then minimize the order embedding loss in Equation \ref{eq:order_loss}. 
In particular, we first sample a random subgraph anchored at a random node $v$ as a target neighborhood $G$. To generate a positive example, we sample a smaller random subgraph from the target neighborhood $G_v$. Negative examples are generated by random sampling of a different subgraph. 
This sampling setup allows us to circumvent exact computation of subgraph isomorphism or subgraph frequency, which would make training intractable.

\cut{
\xhdr{Subgraph Frequency}
In order to use the GNN for subgraph mining with Frequency Definition \ref{def:motif_freq}, we decompose $G_T$ into many overlapping node-anchored neighborhoods. For a large sample of nodes $v \in V_T$, we sample a neighborhood $G_v$ centered at $v$.
We then use the \name encoder GNN $\phi$ to embed the anchored graph $G_v$. 
Given an query motif $(G_Q, u)$, we then predict whether the query is a subgraph of each of the neighborhoods $G_v$ in $G_T$.  Both a hard and a soft frequency estimation can be obtained. A hard estimation counts the number of neighborhoods $G_v$ for which subgraph prediction is positive. However, a soft prediction based on the penalty scores $E$ proves to be more effective when generating common patterns, which we will explain in the next section.}

\subsection{\name Decoder: Motif Search Procedure}

\xhdr{Motif search procedure by walking in order embedding space}
Given the encoder GNN $\phi$ that maps node-anchored subgraphs to order embeddings, the goal of the search procedure is to identify node-anchored motifs that appear most frequently in the given graph dataset. \name uses the approach of generating frequent motifs by iterative addition of nodes. Proposition \ref{prop:walk} shows that the order embedding provides a well-behaved space that makes the search process efficient and effective:

\begin{proposition}
\label{prop:walk}
Given an order embedding encoder GNN $\phi$, let a graph generation procedure be $\{G_0, G_1, \ldots, G_{k-1}\}$,
where at any step $i$, $G_i$ is generated by adding 1 node to $G_{i-1}$.
Then the sequence $\{\phi(G_0), \phi(G_1), \ldots, \phi(G_{k-1})\}$ is a monotonic walk in the order embedding space, i.e. $\phi(G_0) \leq \ldots \leq \phi(G_{k-1})$ elementwise.
\end{proposition}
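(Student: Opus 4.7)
The plan is to show the monotonicity claim reduces to two observations: (i) each step of the generation procedure produces a subgraph relation $G_{i-1} \preceq G_i$, and (ii) the order embedding constraint then transports this partial order into coordinatewise inequalities of the embeddings. Since the order embedding property is assumed to be (approximately) enforced by training as described earlier in the excerpt, only (i) requires an actual argument, and step (ii) is essentially invoking the defining property of $\phi$.

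First I would formalize step (i). By construction, $G_i$ is obtained from $G_{i-1}$ by adding a single new node together with some subset of edges between that new node and existing nodes of $G_{i-1}$. I would then exhibit the inclusion $f: V_{G_{i-1}} \hookrightarrow V_{G_i}$ sending each node of $G_{i-1}$ to itself in $G_i$. Since the generation operation only adds a new node and possibly new edges incident to it, the edges among the original nodes are preserved exactly; and since we focus on node-induced subgraphs (as set out in the problem setup), the induced subgraph of $G_i$ on the nodes of $G_{i-1}$ equals $G_{i-1}$. Hence $f$ is a node-induced subgraph isomorphism and so $G_{i-1} \preceq G_i$ in the partial order $\preceq$ defined earlier.

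Next I would apply step (ii). By the order embedding constraint stated in the encoder section, $A \preceq B \iff \phi(A) \leq \phi(B)$ elementwise. Applying the forward direction of this equivalence to each consecutive pair yields $\phi(G_{i-1}) \leq \phi(G_i)$ for all $1 \leq i \leq k-1$. Chaining these elementwise inequalities (which is valid since $\leq$ on $\mathbb{R}^n$ is transitive coordinatewise) gives
\[
\phi(G_0) \leq \phi(G_1) \leq \cdots \leq \phi(G_{k-1}),
\]
which is precisely the monotonic walk claim.

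The only subtle point, and what I would flag as the main obstacle, is that the equivalence in the order embedding constraint is a training objective rather than a hard guarantee: in practice $\phi$ satisfies it only up to the learned margin. I would therefore state the proposition as a structural consequence of the idealized constraint (as the paper implicitly does), and note that in the anchored setting one must also verify that the anchor node is preserved across the generation steps so that the anchored subgraph relation holds; this is immediate provided the generation procedure never deletes or relabels the anchor, which is the natural convention used by the decoder.
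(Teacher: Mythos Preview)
Your argument is correct and is precisely the natural one: $G_{i-1}$ is a (node-induced, anchor-preserving) subgraph of $G_i$, so the order embedding constraint $A \preceq B \iff \phi(A) \le \phi(B)$ gives the elementwise inequality, and chaining yields the monotonic walk. The paper does not actually spell out a proof of this proposition; it treats the claim as immediate from the order embedding property and simply refers to Figure~\ref{fig:encoder_decoder}(b) for an illustration. Your write-up fills in exactly the reasoning the paper relies on implicitly, and your caveats about the constraint being a training objective and about anchor preservation are appropriate and more careful than what the paper itself states.
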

See Figure \ref{fig:encoder_decoder}(b) for an example of monotonic walk in the order embedding space in 2D.

In particular, we observe that node-anchored frequency monotonically decreases with each step of the monotonic walk:
\begin{proposition}
Given a motif $G_A$ with embedding $x_A$ and motif $G_B$ with embedding $x_B$, $x_A \leq x_B$ elementwise implies $\textrm{Freq}(G_A) \leq \textrm{Freq}(G_B)$ where Freq denotes frequency under Definition 1.
\end{proposition}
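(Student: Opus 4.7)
The plan is to prove the inequality in two steps: first pass from the embedding hypothesis to a combinatorial subgraph relation, then compare the node-anchored frequency counts via a composition-of-isomorphisms argument.

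Step one uses the order embedding property developed in Section~3.2. Assuming $\phi$ has attained (or approximately attained) the ideal zero loss of Equation~\ref{eq:order_loss}, the order embedding constraint guarantees $\phi(A)\leq\phi(B)$ elementwise iff $G_A$ is isomorphic to an anchor-preserving subgraph of $G_B$. Thus from $x_A \leq x_B$ I obtain an injection $f: V_A \to V_B$ with $f(v_A) = v_B$ and $(a,a')\in E_A \iff (f(a), f(a'))\in E_B$ in the node-induced sense.

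Step two rewrites Definition~1 in set form:
\[
\text{Freq}(G_Q) = \bigl|\mathcal{S}(G_Q)\bigr|, \qquad \mathcal{S}(G_Q) := \{u\in V_T : \exists\, g: V_Q\to V_T \text{ with } g(v_Q)=u\}
\]
where $g$ ranges over node-induced subgraph isomorphisms. I would then derive a containment between $\mathcal{S}(G_A)$ and $\mathcal{S}(G_B)$ by composing $f$ with any $g$ witnessing an occurrence; verifying that the composition of two node-induced subgraph isomorphisms is itself node-induced (both edge-presence and edge-absence are preserved through $f(V_A)$) is routine but needs to be spelled out to make the containment precise. The desired frequency inequality then follows by taking cardinalities.

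The main obstacle is pinning down the direction of that containment, because it alone determines which of the two possible inequalities on frequencies comes out. The natural composition $g\circ f$, with $g$ a witness for $G_B$ at $u$ pre-composed by the inclusion $f$, produces a witness for $G_A$ at the same anchor $u$, giving $\mathcal{S}(G_B)\subseteq \mathcal{S}(G_A)$; this is the Downward Closure direction already invoked by the paper (``the number of center-anchored 6-star is no larger than center-anchored 5-star'') and is consistent with the sentence preceding the proposition (``node-anchored frequency monotonically decreases with each step of the monotonic walk''). To close the proof in the direction as worded in the proposition, one must therefore identify the orientation convention (or the pairing of motifs with anchors) under which the witness set for $G_B$ instead contains that of $G_A$; once this orientation is pinned down, the remainder of the proof is a two-line consequence of the order embedding constraint together with a cardinality bound on the witness sets.
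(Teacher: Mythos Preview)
Your two-step plan is exactly the paper's approach: infer the anchored subgraph relation $G_A \preceq G_B$ from the order-embedding constraint, then compare the witness sets by composition. The paper's one-sentence justification reads: ``for all node-anchored graphs $G$ that $G_B$ is a subgraph of, particularly the target graph $G_T$ anchored at each of its nodes, $G_A$ is also a subgraph of $G$.'' Your Step~2 is a more careful unpacking of the same transitivity fact.

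Your worry about orientation is not a gap in your argument but a genuine typo in the proposition as printed. The paper's own justification, like your composition $g\circ f$, gives $\mathcal{S}(G_B)\subseteq\mathcal{S}(G_A)$ and hence $\textrm{Freq}(G_B)\leq\textrm{Freq}(G_A)$; this is also what the sentence immediately preceding the proposition (``frequency monotonically decreases with each step of the monotonic walk''), the Downward Closure discussion, and the anti-monotone property cited right after it all assert. There is no hidden convention that reverses the inclusion. Prove $\textrm{Freq}(G_A)\geq\textrm{Freq}(G_B)$ and note that the inequality sign in the displayed statement is flipped.
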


This proposition is an immediate consequence of the fact that for all node-anchored graphs $G$ that $G_B$ is a subgraph of, particularly the target graph $G_T$ anchored at each of its nodes, $G_A$ is also a subgraph of $G$. This property, known as the anti-monotonic property \cite{elseidy2014grami}, demonstrates that the monotonic walk enforces important structure on the search space. In particular, the frequency of a motif at any point in a monotonic walk is an upper bound on the frequency of all motifs at subsequent points on the walk, serving as an approximation for their frequencies. Thus, this quantity can be useful for guiding search in the order embedding space, a property that we leverage in the next section.

\xhdr{Hard frequency objective}
Assuming a perfect order embedding, the {\em frequent motif objective} is then translated to finding a walk with destination embedding such that the number of neighborhoods $G_v$ in dataset $G_T$ satisfying $\phi(G_{k-1}) \preceq \phi(G_v)$ is maximized. 

\xhdr{Soft frequency objective}
Recall that the \name GNN is trained with a max margin loss on the penalty $E$:
\begin{equation}
E(A, B) = \norm{\max(0, \phi(A) - \phi(B))}^2.
\end{equation}
The penalty can be interpreted as a measure of model confidence: the smaller the penalty is, the more confident is the model about $A$ being a subgraph of $B$.
To take model confidence into account, we define a continuous objective: 
find graph $G_{k-1}$ of given size $k$ that minimizes the total penalty $m(G_{k-1})$. Under Definition~\ref{def:motif_freq}, the predicted frequent subgraph $G_{\mathrm{freq}}$ is then:
\begin{align}
\label{eq:walk}
\begin{split}
    &\phi(G_{\mathrm{freq}}) = \argmin_{G\in \mathcal{G}} m(G), \\
    & \mathrm{where} \ m(G) = \sum_{N \in \mathcal{N}} \norm{\max(0, \phi(G) - \phi(N)}^2.
\end{split}
\end{align}
$\mathcal{G}$ is the set of all graphs of size $k$;
$\mathcal{N}$ is the set of all neighborhood graphs in $G_T$ for all nodes $v \in G$ (See Definition~\ref{def:neighborhood}).
The total penalty $m(G)$ is a \emph{soft estimation} of the number of neighborhoods containing the anchored subgraph $G$. 

\xhdr{Motif search}
Directly finding the frequent motif is hard since the number of possible motifs is exponential to the size, so we design a special search procedure that \textbf{iteratively} grows the motif.
In order to find a frequent motif of size $k$, \name randomly samples a \emph{seed node} from the dataset $G_T$, referred to as the trivial seed graph $G_0$ with size $1$.
Starting from the seed graph $G_0$, we iteratively generate the next graph by adding an adjacent node in $G_T$ (and its corresponding edges). Figure \ref{fig:encoder_decoder}(b) shows an example search process and the corresponding walk in the embedding space.
Proposition \ref{prop:walk} guarantees that the corresponding embedding increases monotonically as more nodes are added. Throughout the walk/generation, we make use of both $G_{i}$ and its embedding $\phi_{i}, \phi_{i} = \phi(G_{i})$. In practice, to attain a robust estimate, we sample several seed nodes, run several walks, and select the resulting motifs of size $k$ that we encountered the most times.

\xhdr{Search strategies: Greedy strategy, beam search and Monte Carlo Tree Search (MTCS)}
%
Our motif search procedure is general and different strategies can be implemented to navigate the space of motifs. 
We propose a greedy strategy to improve scalability. At every step, \name adds the node such that the total penalty $m$ in Eq. \ref{eq:walk} is minimized.
Let $G'$ be chosen by adding 1 adjacent node to $G_i$ in $G_T$.
The greedy approximation simplifies Equation \ref{eq:walk} into a step-wise minimization:
$\forall i = 0, 1, \ldots, k-1,$
\begin{equation}
\label{eq:greedy}
    G_{\mathrm{freq}} = G_{k-1}, \quad G_{i+1} = \argmin_{G'} m(\phi(G')).
\end{equation}
A beam search strategy strikes a balance between the greedy and exhaustive strategies, where instead of greedily adding the next node resulting in the smallest penalty, we explore a fixed number of options with small total penalty scores to add nodes at each generation step.

The \name framework also naturally lends itself to Monte Carlo Tree Search (MCTS) strategies \cite{coulom2006efficient} with neural value function. 
\name MCTS runs multiple walks starting from multiple seed nodes and maintains a visit count $n(G_i)$ and a total value $f(G_i)$ for each step $G_i$ in each walk explored so far. The visit counts $n(G_i)$ refer to the number of times where $G_i$ is visited. Graphs that share an embedding also share a visit count, except for seed nodes;
this setup allows \name to revisit promising seeds and explore new ones. We design $f(G_i) = \sum_{G'} (1-\log(\frac{m(G')}{|\mathcal{N}|} + 1))$, where $G'$ ranges over all size-$k$ graphs reached from any walk that visited $G_i$.
We design the following objective based on the upper confidence bound criterion for trees (UCT) \cite{kocsis2006bandit} to replace the greedy approach (Eq. \ref{eq:greedy}), with an exploration constant $c$:
$\forall i = 0, 1, \ldots, k-1,$
\begin{equation}
    \quad G_{i+1} = \argmax_{G'} \left( \frac{f(G')}{n(G')} + c \sqrt{\frac{\log n({G_i)}}{n(G')}}\right).
\end{equation}
The use of $f$ rather than the total penalty $m(G')$ ensures that the first term has the same numerical scale as the second term. 
In the end, the motifs with highest visit count are selected.

\subsection{Runtime and Memory Analysis}
The memory cost of \name is $O((N+K)d + Kn^2)$, and its runtime is $O(Nb^2 + Kn(n^2+Nnd))$, where $N$ is the number of neighborhoods, $d$ is embedding dimension, $K$ is number of decoder iterations, $n$ is the desired motif size and $b$ is neighborhood size. The polynomial runtime and memory usage enable efficient mining of large motifs.

\subsection{\name Expressive Power}
\label{sec:analysis}
\xhdr{Expressiveness of \name\ encoder GNN}
To show the expressiveness of \name GNN encoder with order embedding, we first show the existence of a perfect order embedding, and then demonstrate that the \name\ encoder architecture can capture information in the perfect order embedding.

In addition to satisfying the properties of subgraph relations, the following proposition demonstrates the full expressive power of the order embedding space in predicting subgraph relations.

\begin{proposition}
Given a graph dataset of size $n$, we can find a perfect order embedding of dimension $D$, where $D$ is the number of non-isomorphic node-anchored graphs of size no greater than $n$. The perfect order embeddings satisfy $z(G_Q) \preceq z(G)$ if and only if $G_Q$ is a subgraph of $G$, where $z(G)$ is the order embedding of $G$.
\end{proposition}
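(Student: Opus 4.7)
My plan is to exhibit a perfect order embedding explicitly, then verify the two directions of the biconditional separately. First, I would enumerate the $D$ non-isomorphic node-anchored graphs of size at most $n$ as $H_1, H_2, \ldots, H_D$; the enumeration is finite because each $H_i$ has a bounded number of nodes (and therefore boundedly many possible edge sets and anchor choices). I then define the embedding coordinate-wise by
\[
z(G)_i \;=\; \mathbf{1}\{H_i \text{ is a node-anchored subgraph of } G\}, \qquad i = 1, \ldots, D,
\]
so that each coordinate of $z(G) \in \{0,1\}^D \subset \mathbb{R}^D$ records whether a particular canonical anchored motif appears inside $G$.

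For the ``only if'' direction I would invoke transitivity of the node-anchored subgraph relation. If $G_Q \preceq G$, then for any $H_i$ with $H_i \preceq G_Q$, composing the two anchor-preserving injections yields $H_i \preceq G$, so $z(G_Q)_i = 1$ implies $z(G)_i = 1$. Since each coordinate of $z(G_Q)$ that equals $1$ forces the corresponding coordinate of $z(G)$ to be $1$ (and coordinates are in $\{0,1\}$), this gives the elementwise inequality $z(G_Q) \preceq z(G)$.

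For the ``if'' direction, the key observation is that $G_Q$ has size at most $n$, so $G_Q$ is node-anchored-isomorphic to exactly one representative $H_{i^\ast}$ in the enumeration. Since $H_{i^\ast} \cong G_Q$ is trivially a subgraph of $G_Q$, we have $z(G_Q)_{i^\ast} = 1$. The elementwise inequality $z(G_Q) \preceq z(G)$ then forces $z(G)_{i^\ast} = 1$, meaning $H_{i^\ast}$, and therefore $G_Q$, is a subgraph of $G$.

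The main obstacle is essentially bookkeeping rather than any deep difficulty: one must be careful that ``isomorphism'' and ``subgraph'' are taken in the node-anchored sense throughout, so that the enumeration $H_1, \ldots, H_D$ picks one representative per equivalence class of anchor-preserving isomorphisms and the chained inclusions in the ``only if'' direction preserve the anchor. Once those definitional points are pinned down, the construction is the standard embedding of the finite poset of anchored graphs (ordered by $\preceq$) into $\{0,1\}^D$ via principal down-sets, and perfectness follows immediately.
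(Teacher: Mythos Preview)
Your proposal is correct and follows essentially the same construction as the paper: enumerate all non-isomorphic node-anchored graphs up to size $n$, assign one coordinate per graph, use transitivity of the subgraph relation for the forward direction, and use the coordinate indexed by $G_Q$ itself for the reverse direction. The only minor difference is that the paper records the \emph{count} of each $H_i$ inside $G$ in coordinate $i$, whereas you record the $0/1$ indicator; both choices make the argument go through identically, and your indicator version is if anything a slight simplification.
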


This can be proven by constructing an order embedding and enumerating all possible non-isomorphic node-anchored graphs of size no greater than $n$, and placing the count of the $i$-th graph into the $i$-th dimension of the order embedding. The proposition indicates that order embeddings can achieve perfect expressiveness. Although $D$ can be large in theory, in practice, the neural model can learn a more compressed embedding space, and we find $D=64$ is sufficient to achieve good performance.


\cut{
\xhdr{Motif mining objective analysis}
We further analyze our objective of maximizing the subgraph frequency as defined in Definition \ref{def:motif_freq}.
In comparison, the classic motif frequency definition\CITE counts the total number of motif appearances in the given graph dataset. 
}


\subsection{Synthetic Graph Pretraining}

\begin{figure}[tb]
    \centering
    \includegraphics[width=0.23\textwidth]{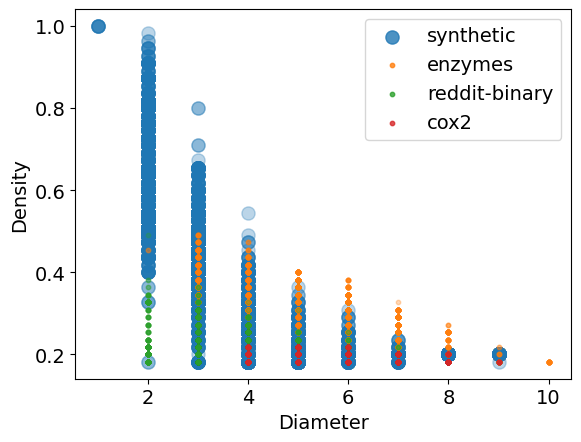}
    \includegraphics[width=0.23\textwidth]{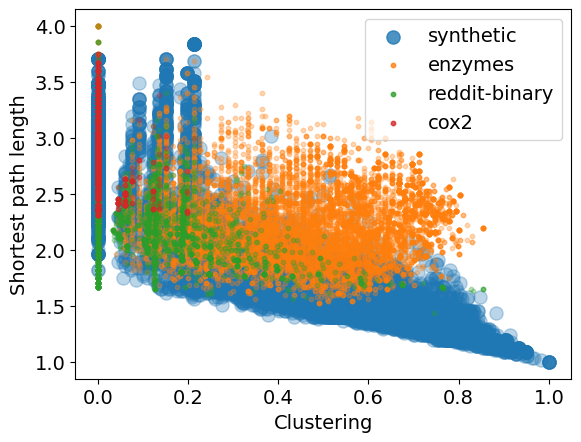}
    \vspace{-2mm}
    \caption{Graph statistics of the synthetic and real-world graph datasets. Each point represents the statistics of one graph; the color of a point represents the dataset that the graph belongs to.}
    \label{fig:syn-real-stats}
\end{figure}

The \name GNN needs to be trained only once and can be applied to any input graph. In fact, we train the \name GNN on a set of synthetic graphs.
%
We generate millions of synthetic training datasets to learn a general order embedding space agnostic to the dataset domain. This pretrained GNN model can then be immediately applied to real-world datasets not seen in training without further fine-tuning.

In contrast to the previous works, we use a combination of graph generators, such as Erdos-Renyi, Power Law Cluster and others, to ensure that the model can be trained on a diverse set of graphs in terms of graph properties (parameter specification in Appendix \ref{appendix: synthetic dataset}). 

Using this graph generator, we create a balanced dataset of graph pairs $(A, B)$ in which a pair is positive if $B$ is a subgraph of $A$. To create positive pairs, we sample a graph $A$ from the generator and sample a subgraph $B$ of $A$ using the sampling procedure of \textsc{MFinder}~\cite{kashtan2004efficient}. To create negative pairs, we sample a graph $A$ from the generator. With 50\% probability, we sample a subgraph $B$ of $A$, then randomly add up to 5 edges to $B$ so that it is unlikely to be a subgraph of $A$. Otherwise, we randomly sample another graph from the generator, and the sample is unlikely to be a subgraph of $A$. We sample $A$ of size uniform from 6 to 29 and $B$ of size uniform from 5 to $|A|-1$. Empirically we observe that the model is able to continue improving in performance even after seeing millions of training pairs, hence a large synthetic dataset provides an important performance advantage.

\xhdr{Synthetic dataset statistics}
We demonstrate that our synthetic data generation scheme is capable of generating graphs with high variety. Figure \ref{fig:syn-real-stats} shows the statistics of the synthetic graphs, compared to real-world datasets. In terms of graph statistics, including density, diameter, average shortest path length, and average clustering coefficient, the synthetic dataset (large blue dots) covers most of the real-world datasets, including those in the domains of chemistry (\textsc{COX2}), biology (\textsc{ENZYMES}) and social networks (\textsc{Reddit-Binary}) \cite{Fey/Lenssen/2019}.

The high coverage of statistics suggests that the synthetic dataset is an application-agnostic dataset that allows \name to learn a highly generalizable order embedding model, and immediately apply it to analyzing new real-world datasets without further training.
\begin{figure*}[ht]
    \centering
    \begin{tabular}{p{0.35\textwidth} p{0.3\textwidth} p{0.3\textwidth}}
    \vspace{-1pt} 
    \setlength\tabcolsep{4pt}
    \begin{tabular}{c|c|c|c}
        \multicolumn{4}{c}{\textbf{Hit rate (size-6 graphs)}} \\
        \hline
        Rank & SPMiner & MFinder & Rand-ESU  \\
        \hline
        10 & 0.90 & 0.20 & 0.30 \\
        20 & 0.85 & 0.20 & 0.45 \\
        30 & 0.77 & 0.37 & 0.57 \\
        40 & 0.75 & 0.40 & 0.60 \\
        50 & 0.70 & 0.50 & 0.56 \\
        \hline
    \end{tabular}
    &
    \vspace{-1pt} 
    \includegraphics[width=0.3\textwidth]{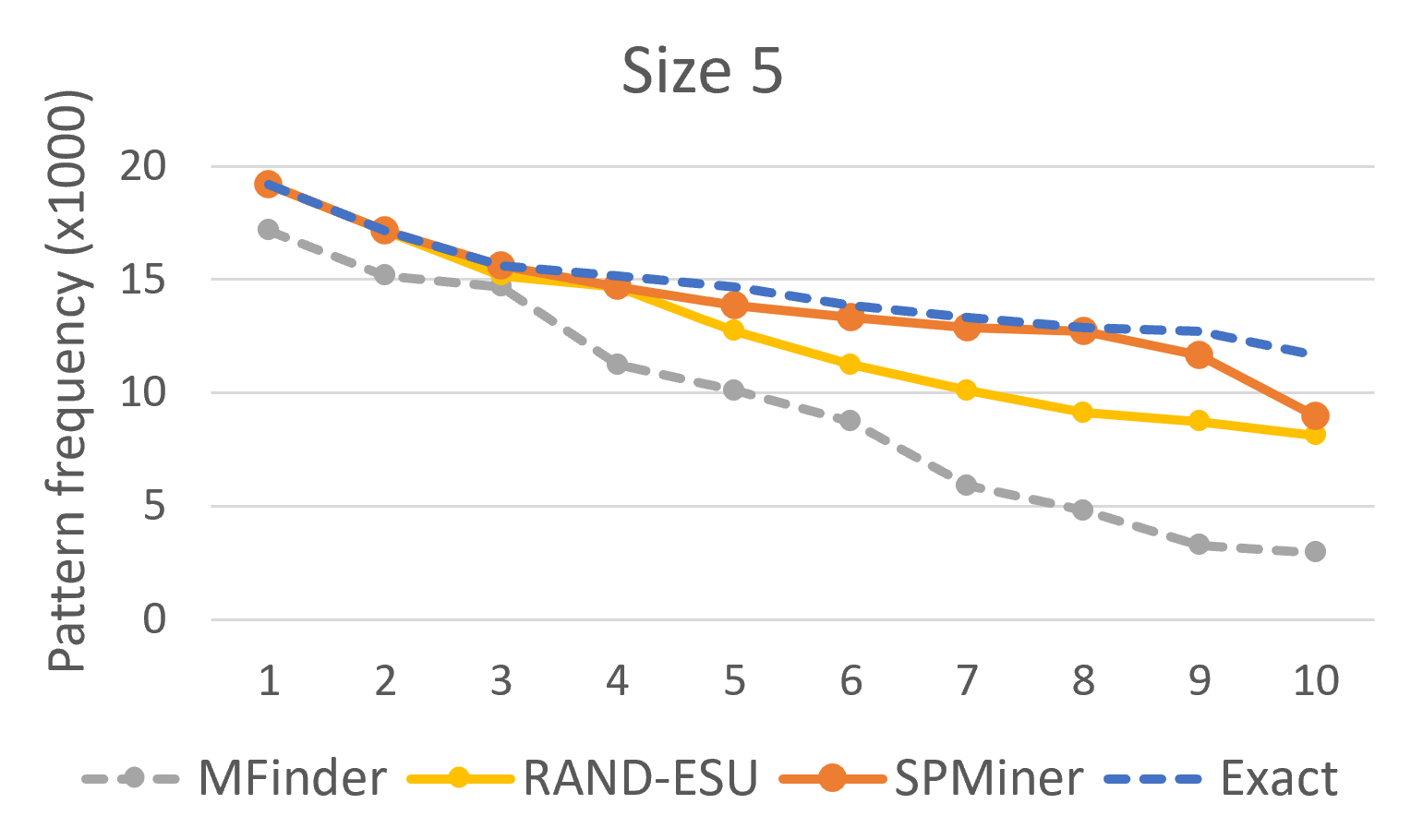} & 
    \vspace{-1pt} 
    \includegraphics[width=0.3\textwidth]{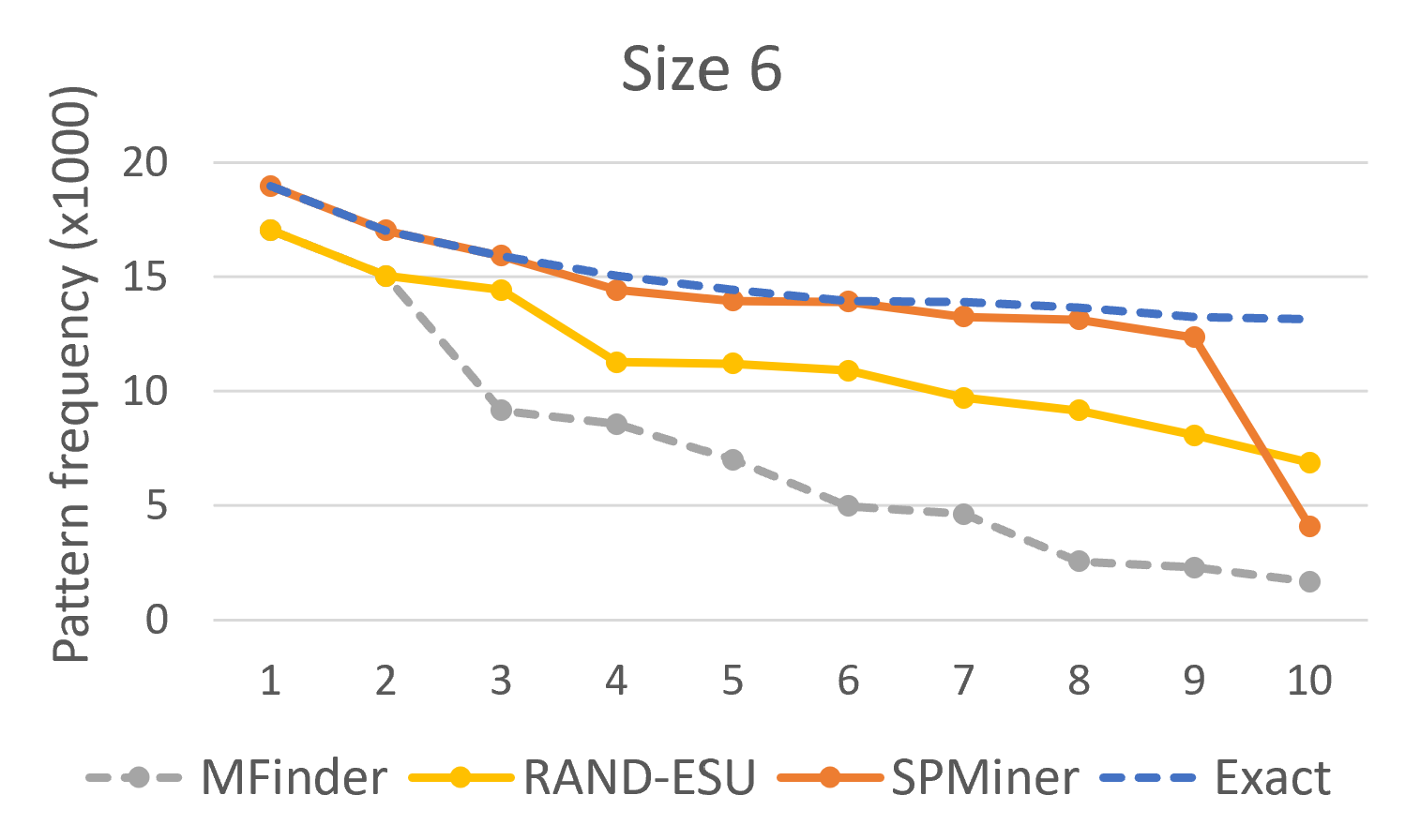}
    \end{tabular}
    \vspace{-5pt}
    \caption{
    \name vs. approximate frequent subgraph mining techniques: Among size-6 motifs, SPMiner is able to correctly identify the top $K$ most frequent motifs more accurately than baselines (left). Furthermore, the top 10 motifs identified by SPMiner have higher frequency than those found by baselines, for size 5 (middle) and size 6 (right) motifs. The blue dotted line represents the frequency of the groundtruth most frequent motifs.}
    \label{fig:small_motifs}
\end{figure*}

\section{Experiments}

To our knowledge, \name\ is \emph{the first method for learning neural models to perform frequent motif mining}. 
Here, we propose the first benchmark suites of experimental settings, baselines and datasets to evaluate the efficacy of neural frequent motif mining models.


\subsection{Experimental setup} 
We perform the following experiments:
(1) \textbf{Small motifs.} We experiment with small motifs of 5 and 6 nodes, where exact enumeration methods~\cite{hovcevar2014combinatorial,cordella2004sub} are able to find ground-truth most frequent motifs. We show that \name nearly perfectly finds these most frequent motifs. 
(2) \textbf{Large planted motifs.} To evaluate performance for mining larger motifs (computationally prohibitive for exact enumeration methods), we plant a large 10-node motif many times in a dataset (details in Appendix). Again, we show that \name is able to identify the planted motif as one of the most frequent in the dataset.
(3) \textbf{Large motifs in real-world datasets} We also compare \name against approximate methods~\cite{wernicke2006efficient,kashtan2004efficient} that scale to motifs with over 10 nodes. Here we find that \name identifies large motifs that are 100x more frequent than the ones identified by approximate methods in real-world datasets.
(4) \textbf{Runtime Comparison} 
We compare \name against non-neural exact methods, gSpan and Gaston~\cite{yan2002gspan,nijssen2005gaston}, as well as the highly accurate approximate method Motivo~\cite{bressan2021faster} to show that although more accurate, these methods are exponentially more expensive with respect to the size of subgraph patterns.
(5) \textbf{Encoder validation} validates the representations learned by the encoder architecture, and demonstrates the superior generalizability of the order embedding space through an ablation analysis.

\xhdr{Approximate baselines}
We compare against two widely-used approximate sampling-based motif mining algorithms, \textsc{Rand-ESU}~\cite{wernicke2006efficient} and \textsc{MFinder}~\cite{kashtan2004efficient}. For \textsc{MFinder}, we omit the slow $O(n^{n+1})$ exact probability correction. \textsc{MFinder} takes a degree-weighted sampling approach to seek motifs containing high-degree hub nodes. \textsc{Rand-ESU} iteratively expands candidate motifs one node at a time by maintaining a candidate set to ensure unbiased sampling. We tune hyperparameters of these baselines and \name\ so that they sample a  comparable number of subgraphs and achieve comparable wall clock runtime (details in Appendix \ref{appendix: runtime}).

\xhdr{Datasets}
We mine frequent motifs in a variety of domains, including biological (\textsc{Enzymes}), chemical (\textsc{Cox2}) and image (\textsc{MSRC}) datasets~\cite{borgwardt2005protein,sutherland2003spline,neumann2016propagation}. Table \ref{tab:real_dataset_stats} in Appendix \ref{appendix:implement detail} shows the graph statistics of the datasets used in our experiments. 
All datasets have been made public.
We focus on the topological structure of these datasets, omitting node labels when identifying frequent motifs; incorporation of labels is an interesting avenue for further work. 

\subsection{Results}

\xhdr{(1) Small motifs}
First, we experiment with small motifs of 5 and 6 nodes, where exact enumeration methods~\cite{hovcevar2014combinatorial,cordella2004sub} are able to find true most frequent motifs.
We pick an existing dataset, \textsc{Enzymes}, and count the exact motif counts for all possible motifs of size 5 and 6. 
We use the metric \emph{hit rate} at $k$, which measures the proportion of top-$k$ frequent motifs identified by \name and baselines, that are within the ground truth top-$k$ most frequent motifs by exact enumeration.
Figure \ref{fig:small_motifs} left shows that \name consistently achieves higher hit rate compared to baselines for mining size 6 motifs.\footnote{Note that as the rank increases, each method is required to identify a larger set of queries, thus the hit rate may increase or decrease.}

We further compare the frequencies of the top 10 motifs found by exact enumeration, \name and the baselines, \textsc{MFinder} and \textsc{Rand-ESU} for size 5 and 6 motifs. We observe that \name\ can consistently identify the size 5 and size 6 motifs whose frequencies are within 90\% of that of the groundtruth.
Furthermore, \name\ runs in 5 minutes, versus 10 hours for exact enumeration with (exact) \textsc{ESU} \cite{wernicke2006efficient} using the same hardware (see Appendix).

\xhdr{(2) Large planted motifs}
Currently exact methods for finding most frequent motifs are prohibitively expensive for larger motifs, and hence groundtruth frequency is hard to obtain. To evaluate identification of larger motifs, we randomly generate a large motif pattern $Q$ of size $n_1=10$, and randomly attach an instance of the motif to base graphs of size $n_2=10$ generated using a synthetic generator. 
Each motif is attached to base graph by randomly connecting a nodes from the motif with a node from the base graph. This process ensures that this pattern is one of the most frequent in the dataset. 
We repeat this process to generate a dataset of $1000$ graphs.

\begin{figure}[tb]
\centering
\includegraphics[width=0.36\textwidth]{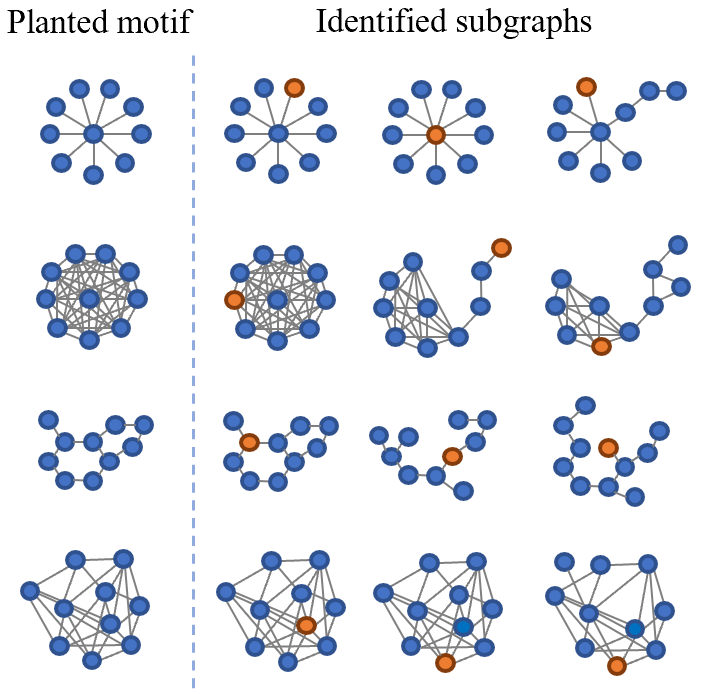}
\caption{The frequent subgraphs identified by \name (under Definition 1) closely match the planted motifs. Orange nodes denote the anchor node of the motif identified in \name.
}
\label{fig:planted-patterns}
\end{figure}

%

\begin{figure*}[tb]
    \centering    
    \includegraphics[width=0.9\textwidth]{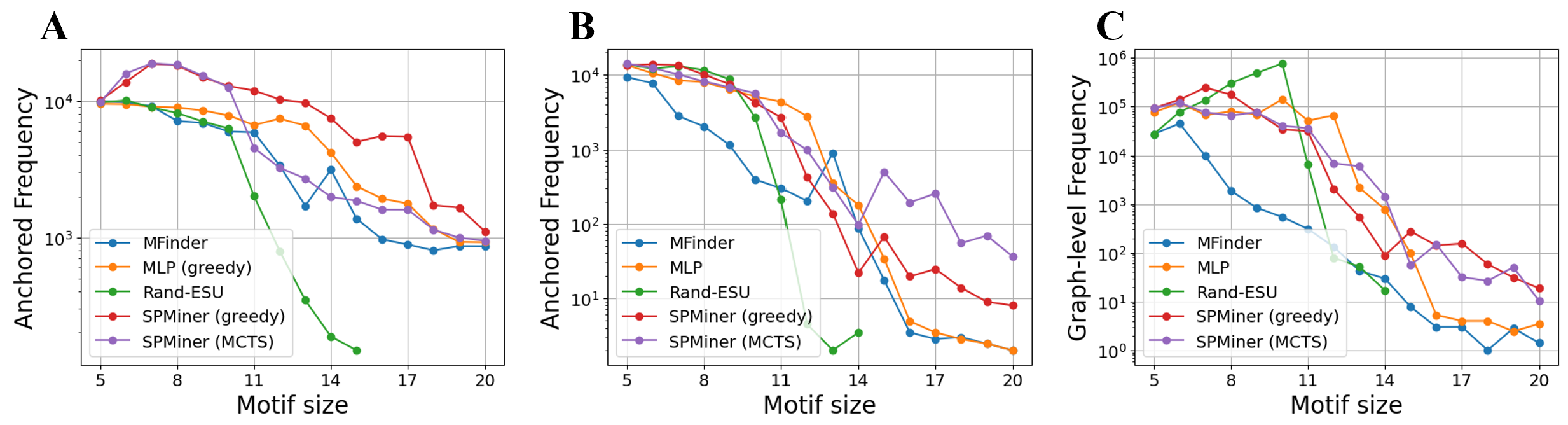}

    \caption{Comparison of median frequencies of motifs identified by different search strategies (\name-greedy and \name-MCTS) and baselines (MFinder, Rand-ESU); higher is better. 
    Across all motif sizes, \name finds patterns with higher node-anchored frequency than do the neural MLP baseline or the Rand-ESU and MFinder sampling-based baselines, across \textsc{COX2} (\textbf{A}), \textsc{Enzymes} (\textbf{B}), and \textsc{Enzymes} (\textbf{C}) datasets.
    }
    \label{fig:log-freqs}
\end{figure*}

Then, we use \name to identify frequent motifs. We expect \name to output patterns that resemble the planted motif. Figure \ref{fig:planted-patterns} shows three examples of the top 10 identified frequent patterns outputted by \name. We observe that for all examples, the planted motif is identified as one of the most frequent by \name\footnote{See Appendix for more details on all identified motifs.}.

\xhdr{(3) Identifying large motifs in real-world datasets}
Lastly, we compare against several baselines for finding large frequent motifs in graph data. Our baselines are \textbf{\textsc{MFinder}}, \textbf{\textsc{Rand-ESU}} and a neural baseline \textbf{\textsc{MLP}} that replaces order embeddings with an MLP and cross entropy loss. For varying motif sizes $k$, we take the top ten candidates for the most frequent motif for each method, and compute their true mean frequency via exact subgraph matching. Compared to approximate methods, \name is able to identify motifs that appear 10-100x more frequently, as seen in Figure \ref{fig:log-freqs}, particularly for graphs with size $>12$. The MCTS search variant of \name discovers frequent patterns of over 15 nodes in \textsc{Enzymes}, while no baseline can find motifs of median frequency more than $3$.
Note that even though \name's primary objective is to maximize anchored frequency (Definition \ref{def:motif_freq}), it outperforms all baselines on large motifs by 10-100x with the graph-level Definition \ref{def:motif_freq_graph} as well (Figure \ref{fig:log-freqs} right).
We further present the results for other datasets in Appendix \ref{appendix:implement detail}.



\xhdr{(4) Runtime Comparison}
We consider state-of-the-art exact motif mining methods, gSpan~\cite{yan2002gspan} and Gaston~\cite{nijssen2005gaston}, as well as the highly accurate approximate method Motivo~\cite{bressan2021faster}. These methods can produce exact, or highly accurate frequent subgraphs; at the cost of exponentially increasing cost with respect to subgraph pattern sizes.
Note that although an approximate method, Motivo use an expensive build-up phase to color the target before sampling~\cite{bressan2021faster}.
For each dataset, we adapt their code in C++, and tune the support threshold parameter~\cite{yan2002gspan} in order to obtain at least $10$ frequent motifs of the specified size, without exceeding a runtime budget of 1 hour and memory budget of 50GB.
Implementation details of these baselines are further explained in Appendix.

\begin{figure}[t]
    \centering   
    \includegraphics[width=0.47\textwidth]{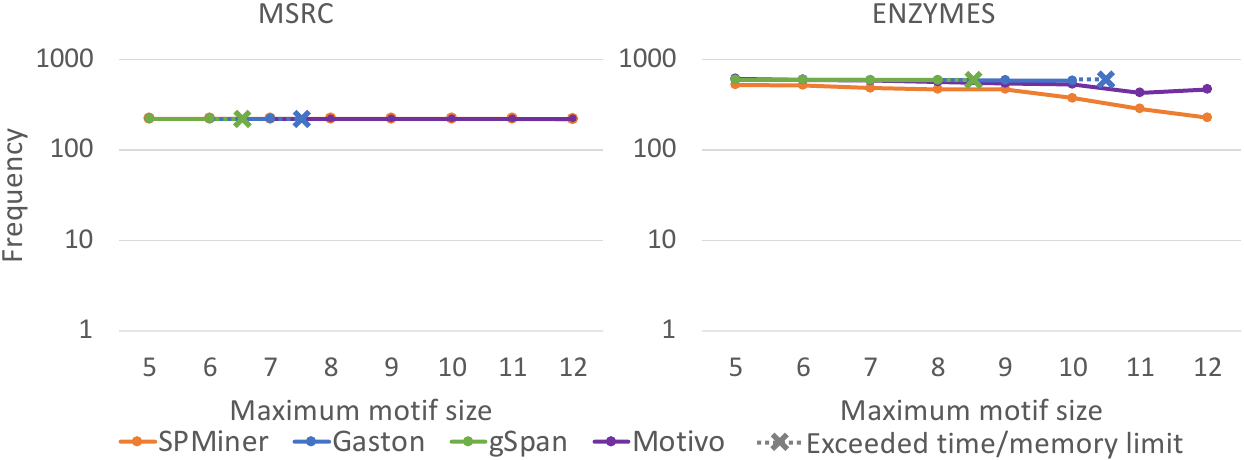}
    \caption{Frequency of motifs identified by Gaston, gSpan, Motivo, and \name. 
    \name is able to identify high-frequency motifs of large size. 
    }
    \label{fig:freq_vs_size}
\end{figure}

\begin{figure}[t]
    \centering    
    \includegraphics[width=0.47\textwidth]{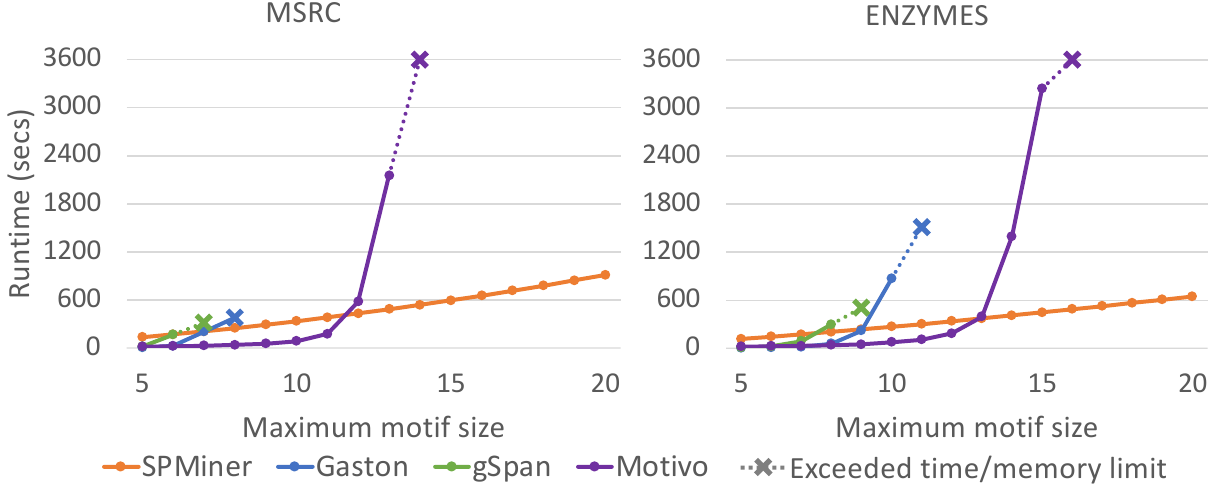}
    \caption{Runtime comparison between non-neural methods and \name. The curves for Gaston, gSpan, and Motivo end early due to exceeding memory or time limit for larger motifs.}
    \label{fig:runtime_vs_size}
\end{figure}

In Figure \ref{fig:freq_vs_size}, we plot the frequency of the motif identified by \name and the exact baseline methods, against the size of the motifs to be identified. 
\name identifies small frequent motifs (of size less than 10) whose frequency is at least $90\%$ that of the groundtruth most frequent motifs (by exact methods). 
For both datasets, gSpan and Gaston exceed the resource budget when identifying motifs of size larger than 10, while \name can still identify them efficiently.

We further compare the runtimes. Note that \name uses synthetic graph pretraining which takes fixed amount of time (8 hours) and shared for all experiments. ~\footnote{See Appendix \ref{appendix: runtime} for runtime details}.
Hence we report the runtime of \name at inference time. We show that the runtime of the exact methods grows exponentially, whereas \name has a linear trend as the size of motif grows.
In Figure \ref{fig:runtime_vs_size}, we plot the runtime required against the size of frequent motifs identified by \name and the exact methods. We observe that even with more computational budget, baseline methods quickly become intractable due to exponential increase in runtime.

\cuthide{
We first demonstrate that \name\ is capable of finding the most frequent motifs in synthetic experiments.
We further compare \name\ with existing heuristic approaches and alternative neural baselines in a diverse range of real-world datasets, showing that \name is able to find larger and more frequent motifs than previous methods, given the same time budget.

We also evaluate the performance improvements of the encoder of \name\ by  comparing its encoder to state-of-the-art models, in both balanced and imbalanced subgraph prediction settings. We further perform ablation studies to show the efficacy of its components.

\subsection{Datasets and Baselines}
\xhdr{Synthetic datasets}
Synthetic datasets with graph generator can be used to train a general subgraph relation encoder agnostic to the dataset domain, and the model can benefit from large data by sampling from the generator.
In contrast to the previous works, we use a combination of graph generators, to ensure that the model can be trained on a diverse set of graphs in terms of graph properties. 

We design a generator that randomly chooses one of the following generators:
Erd\H{o}s-R\'enyi (ER)\cite{erdds1959random}, Extended Barabási–Albert graphs\cite{albert2000topology}, Power Law Cluster graphs~\cite{holme2002growing} and Watts-Strogatz graph~\cite{watts1998collective}. We design prior distributions for the parameters for each of the generators, and elaborate the details in Appendix \ref{appendix: synthetic dataset}.

\cuthide{
\begin{wrapfigure}{r}{0.5\textwidth}
    \centering
    \includegraphics[width=0.23\textwidth]{figs/diameter-vs-density}
    \includegraphics[width=0.23\textwidth]{figs/clustering-vs-path-length}
    \caption{Graph statistics of the synthetic and real-world graph datasets. Each point represents the graph statistics of one graph; the color represents the dataset that the point belongs to. \jure{why is this plot important? shal we drop it?} \rex{This is to show that our syn dataset can generalize to real datasets. plan to replace this by PR curve.} \andrew{plus plot of performance versus neighborhood size (per Jure's suggestion)}}
    \label{fig:syn-real-stats}
\end{wrapfigure}
} 

We mine frequent motifs in a wide range of real-world datasets in domains of biology (\textsc{Enzymes}, \textsc{DD}, \textsc{PPI}), chemistry (\textsc{Cox2}) and social networks (\textsc{Reddit-Binary}, \textsc{Collab}).

We demonstrate that our synthetic data generation scheme is capable of generating graphs with high diversity. Figure \ref{fig:syn-real-stats} shows the statistics of the synthetic graphs, compared to real-world datasets. In terms of graph density, diameter, average shortest path length, and average clustering coefficient, the synthetic graphs (blue dots) covers most of the real-world datasets, including those in the domains of chemistry (\textsc{COX2}), biology (\textsc{ENZYMES}) and social networks (\textsc{Reddit-Binary}).

The high coverage of statistics suggests that the synthetic dataset can serve well as a application-agnostic dataset that allows our model \name to learn from a variety of graph structures. We therefore train the model first on the synthetic dataset, and fine-tune them on the real-world datasets.

\xhdr{Baselines}
We use both traditional motif mining methods and neural alternatives to compare against \name.
We compare against a popular sampling method \cite{kashtan2004efficient}, referred to as \textbf{\textsc{Mfinder}}, which has the potential to scale to larger patterns than exact enumeration. The method works by estimating connected subgraph concentrations through a sampling procedure based on a graph search with random expansion of the frontier. 
Another state-of-the-art approximate motif mining algorithm, \textbf{\textsc{Fanmod}} uses a random ESU procedure~\cite{wernicke2006efficient}, which performs an unbiased sampling of subgraphs from the leaves of the ESU tree.
See the Appendix \ref{appendix:implement detail} for our detailed implementation essential to finding larger motifs.

For alternative neural methods, we consider both alternative encoder and decoder \rex{tbd}

\subsection{Encoder}

\begin{figure}[t]
    \centering
    \begin{tabular}[t]{c@{\hskip 1cm}c}
    \begin{subtable}{0.4\textwidth}
        \begin{tabular}{cc}
            \begin{subfigure}[t]{0.6\textwidth}
                \centering
\includegraphics[width=\textwidth]{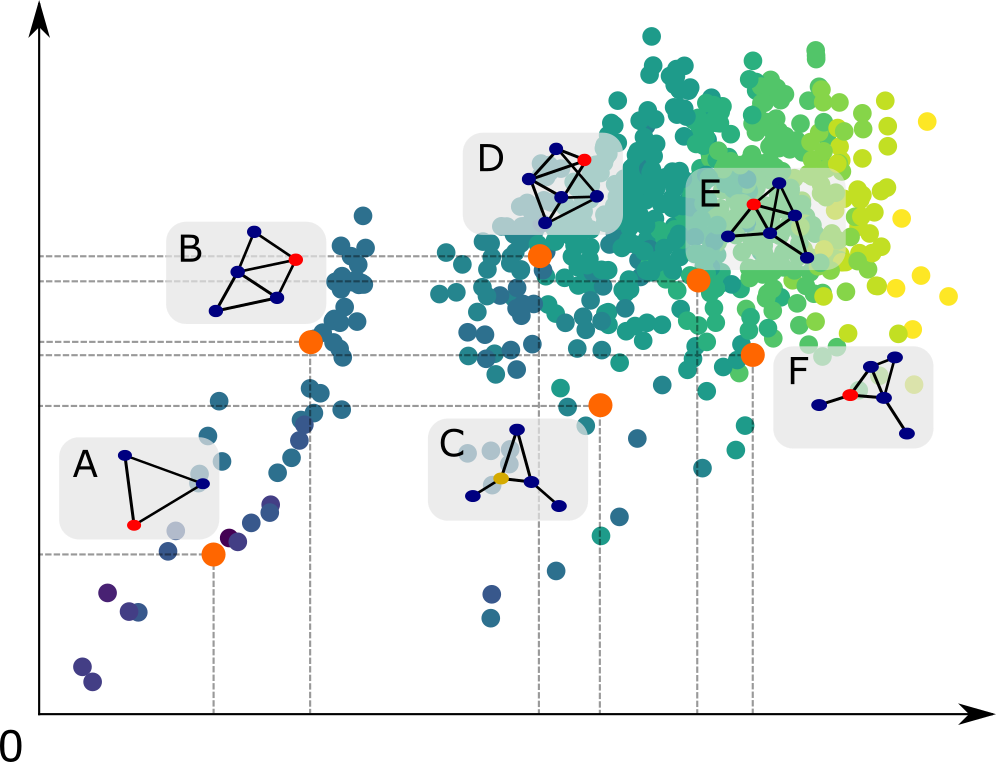}
            \end{subfigure}
        &

            \begin{subfigure}[t]{0.4\textwidth}

            \includegraphics[width=\textwidth]{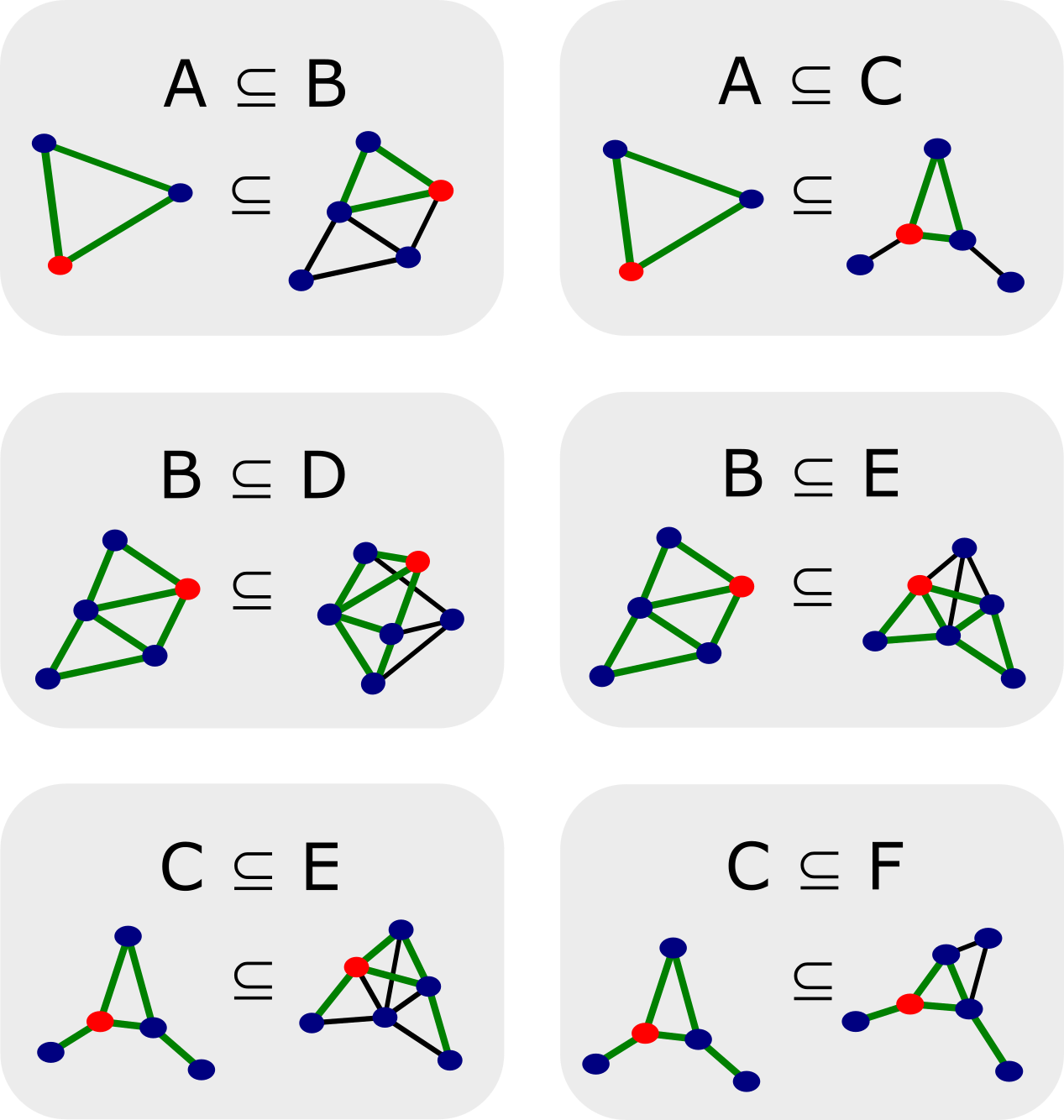}
            \end{subfigure}
            
            \end{tabular}
            \caption{Order embedding space}
        \end{subtable}
        &
        \begin{subtable}{0.5\textwidth}
        \centering
        \smallskip 
        \resizebox{\textwidth}{!}{
        \begin{tabular}{ccccccccc}\cmidrule[\heavyrulewidth]{1-9}
            & \multicolumn{2}{c}{Synthetic} & \multicolumn{2}{c}{\textsc{Enzymes}} & \multicolumn{2}{c}{\textsc{COX2}} & \multicolumn{2}{c}{\textsc{FirstMMDB}} \\
            & Acc & AP & Acc & AP & Acc & AP & Acc & AP\\
            \cmidrule{1-9}
            MLP (SAGE) & \textbf{96.3} & 43.9 & 96.2 & 40.2 & 91.3 & 45.8 & 94.7 & 38.8 \\  
            MLP (GIN) & 96.1 & 41.0 & 95.4 & 34.8 & 89.0 & 37.3 & 94.7 & 35.9 \\  
            MLP (GCN) & 95.8 & 37.2 & 94.7 & 31.5 & 91.6 & 47.2 & 94.3 & 34.1  \\  
            Order (No skip) & 95.9 & 45.7 & 96.5 & 57.3 & 92.1 & 70.5 & 95.1 & 53.4 \\  
            \cmidrule{1-9}
            Order (Full) & 96.2 & \textbf{46.8} & \textbf{96.6} & \textbf{60.9} & \textbf{92.6} & \textbf{71.2} & \textbf{95.6} & \textbf{59.9} \\
        \cmidrule[\heavyrulewidth]{1-9}

        \end{tabular}
        }
        \caption{Model comparison} 
        \label{fig:encoder_performance}
        \end{subtable}
    \end{tabular}
\caption{(a) Two dimensions of the embedding space. Each point corresponds to a graph; color range from blue to yellow corresponds to small-to-large graphs. (b) The order embedding encoder outperforms a MLP baseline on subgraph classification for pairs of random graphs. In particular, the order embedding model demonstrates superior transfer performance to real datasets, indicating the universality of the learned subgraph lattice. Furthermore, learnable skip provides a performance boost.}
\end{figure}


\subsection{Experiments with Groundtruths}
\label{sec:syn_task}
We design experiments for which tractable groundtruths can be obtained, and demonstrate that \name\ is capable of identifying the most frequent motif patterns accurately, and is able to scale to large motif sizes without suffering from combinatorial explosion of possible motifs.

\xhdr{Most frequent small motifs}
In this experiment, we pick an existing dataset, \textsc{ENZYMES}, and count the exact motif counts for all possible motifs of size 5 and size 6. 
We compare the average frequencies of the top 10 patterns found by exact enumeration, \name\ and the sampling baseline.
\begin{figure}[t!]
    \centering
    \includegraphics[width=0.4\textwidth]{figs/exact-5-enzymes.png}
    \includegraphics[width=0.4\textwidth]{figs/exact-6-enzymes.png}
        \begin{tabular}{ccccccccc}\cmidrule[\heavyrulewidth]{1-9}
            & \multicolumn{2}{c}{Synthetic} & \multicolumn{2}{c}{\textsc{Enzymes}} & \multicolumn{2}{c}{\textsc{COX2}} & \multicolumn{2}{c}{\textsc{FirstMMDB}} \\
            & Acc & AUPR & Acc & AUPR & Acc & AUPR & Acc & AUPR\\
            \cmidrule{1-9}
            GCN+MLP & 95.8 & 37.2 & 94.7 & 31.5 & 91.6 & 47.2 & 94.3 & 34.1  \\  
            GIN+MLP & 96.1 & 41.0 & 95.4 & 34.8 & 89.0 & 37.3 & 94.7 & 35.9 \\  
            SAGE+MLP & \textbf{96.3} & 43.9 & 96.2 & 40.2 & 91.3 & 45.8 & 94.7 & 38.8 \\  
            No skip & 95.9 & 45.7 & 96.5 & 57.3 & 92.1 & 70.5 & 95.1 & 53.4 \\  
            \cmidrule{1-9}
            \name & 96.2 & \textbf{46.8} & \textbf{96.6} & \textbf{60.9} & \textbf{92.6} & \textbf{71.2} & \textbf{95.6} & \textbf{59.9} \\
        \cmidrule[\heavyrulewidth]{1-9}

        \end{tabular}

    \caption{Frequencies of top 10 patterns found by \textsc{mfinder}, \name\ and exact enumeration for size-5 patterns (left) and size-6 patterns (right), sorted descending for each method. We see that \name\ finds patterns that have frequencies much closer to those found by exact enumeration than \textsc{mfinder}. Dataset: \textsc{Enzymes}}
    \label{fig:log-freqs}
\end{figure}

We observe that \name\ can consistently identify the most common size 5 and size 6 motifs in the given datasets. The top-5 frequent motifs predicted by the model have counts that are at least $90\%$ of the counts for the groundtruth most common motifs. \rex{baseline comparison}

           



\xhdr{Planted large motifs}
In this experiment, we randomly generate a large subgraph pattern of size $n=10$, and randomly attach an instance of this pattern to a dataset of graphs of size $n=15$ drawn from the synthetic generator. We perform the attachment by adding three random edges with endpoints in both the pattern and the dataset graph. This process ensures that this pattern is one of the most frequent in the dataset. Then, we use run decoder on this modified dataset. We expect our decoder to output patterns that resemble the planted pattern. We see (Figure \ref{fig:planted-patterns}) that the top three patterns outputted by the decoder closely match the planted patterns, recovering 9 or 10 out of the 10 nodes in the pattern in each case.\footnote{We note that the decoder picks up on the attachments that link the planted pattern to the rest of the graph. This behavior is reasonable since these attachments tend to be frequent due to the generation process.}

\begin{figure}[h]
\centering
\includegraphics[width=\textwidth]{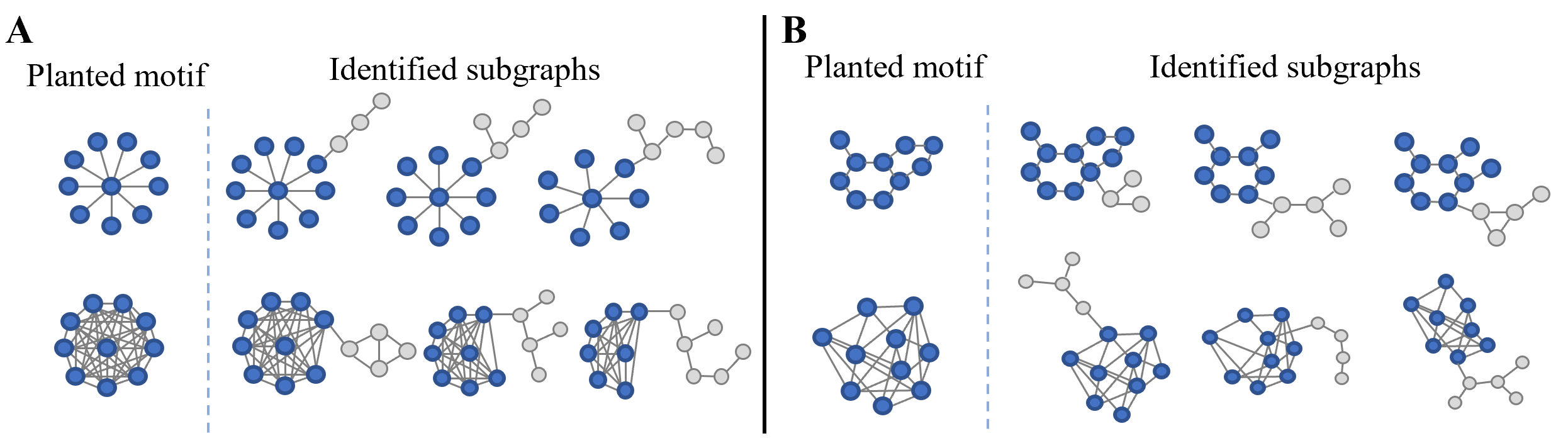}
\caption{The top three subgraphs generated by the decoder closely match the planted patterns, recovering 9 or 10 out of the 10 nodes in the pattern in each case.
\jure{Great results!! Important: Don't call graphs ``generated'' because generated implies that the graph is synthetic. Call them ``identified''. So, our method identifies motifs/patterns. It does not generate them. Make sure to use ``identify'' in the entire paper. Second point, is there any significance to the red node. I don't think there is. I suggest to drop the red color in all plots (plot all nodes the same color).} 
}
\label{fig:planted-patterns}
\end{figure}

\subsection{Mining Large Motifs in Real-world Graphs}
Given the favorable evidance in Section \ref{sec:syn_task}, we further perform experiments on real-world graphs where the goal is to identify the most frequent patterns of large sizes ($n \ge 7$) where exhaustive enumeration is no longer computationally feasible.

\xhdr{Evaluation metric}
To find larger motifs in real-world datasets, it is computationally infeasible to obtain the groundtruth by enumerating all possible motifs.
Instead, we run \name\ and the baseline methods to obtain the motif patterns output by each method. To evaluate, we then run a time-consuming exact subgraph matching procedure\cite{cordella2004sub} and measure the counts of the motif found by \name and the baseline methods. 
We compare the base 10 log counts of the found frequent motifs, as shown in Figure \ref{fig:log-freqs}.
We observe that the performance gap is much larger when finding larger motifs.
Compared to existing non-learning-based baselines, \name is able to identify motifs that appear 100x more frequent.

\begin{figure}[t!]
    \centering
    \includegraphics[width=0.49\textwidth]{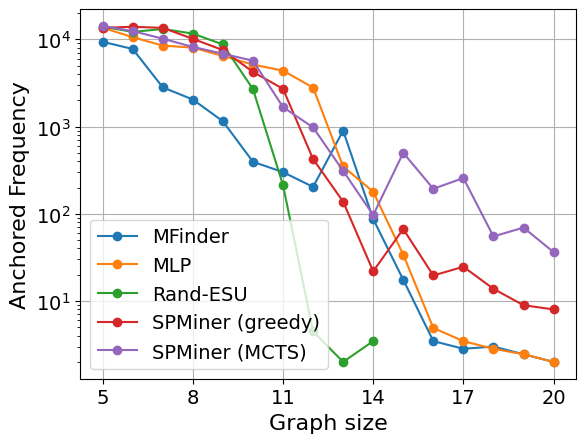}
    \includegraphics[width=0.49\textwidth]{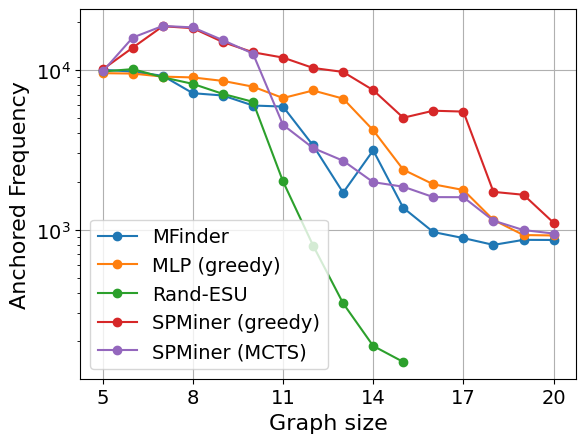}
    \caption{Across all motif sizes, \name\ finds patterns with higher average log frequency than the baseline, on both \textsc{ENZYMES} (left) and \textsc{COX2} (right).}
    \label{fig:log-freqs}
\end{figure}

} 

\begin{figure}[t]
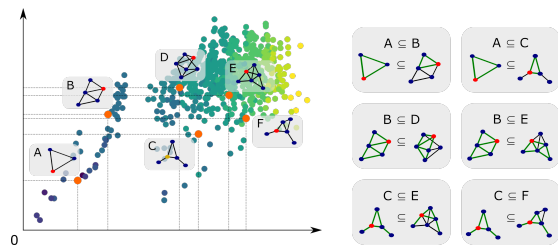

        \begin{subtable}{0.4\textwidth}
                \begin{tabular}{cc}
                    \begin{subfigure}[t]{0.6\textwidth}
                        \centering
        \includegraphics[width=\textwidth]{figs/encoder-embs}
                    \end{subfigure}
                &

            \begin{subfigure}[t]{0.4\textwidth}

            \includegraphics[width=\textwidth]{figs/encoder-graphs-vert.png}
            \end{subfigure}
            
            \end{tabular}
        \end{subtable}
    \caption{Two dimensions of the embedding space. Each point corresponds to a graph; color range from blue to yellow corresponds to small-to-large graphs.}
    \label{fig:encoder_embs}
\end{figure}


\xhdr{(5) Encoder validation}
Figure~\ref{fig:encoder_embs} demonstrates the structure of our order embedding space (here in just 2 dimensions). Notice how subgraphs are embedded to the lower left of their supergraphs. Experiments shows that \name achieves $95\%$ accuracy in determining whether one graph is a subgraph of the other. The learnable skip layer also contributes to the encoder performance gains by increasing the accuracy and area under PR curve. Please refer to appendix for subgraph relation performance of our GNN model component and the ablation studies.
\section{Limitations}
\name is the first approach to mine large frequent motifs in graph datasets.
However, there are still limitations to this pioneering approach.
The algorithm does not directly optimize for the graph-level frequency definition (Definition 2), although we observe that in practice the subgraphs identified by \name are still frequent (see Figure \ref{fig:log-freqs}(c)).
Additionally, \name only provides a list of frequent subgraph patterns via search, but not an accurate count prediction for a given subgraph pattern. Future work in approximating the \# P problem of subgraph counting is needed.
Finally, although in experiments, we find that distinguishing between the anchor node and other nodes in the neighborhood via node features results in more expressive GNNs beyond the WL test, further work on more expressive GNNs can further improve \name.
We hope that \name opens a new direction in graph representation learning and embedding space search to solve graph mining problems. 

\section{Conclusion}
We propose \name, the first neural framework for identifying frequent motifs in a target graph using graph representation learning. \name learns to encode subgraphs into an order embedding space, and uses a novel search procedure on the learned order embedding space to identify frequent motifs. \name advances the state-of-the-art, and is able to identify frequent motifs that are 10-100x more frequent than those identified by existing methods.

\clearpage

\bibliography{bibli}
\bibliographystyle{icml2020}

\clearpage

\appendix
\appendix

\section{Model Analysis}

\subsection{Proof of order embedding expressiveness}
\begin{proposition}
Given a graph dataset of size $n$, we can find a perfect order embedding of dimension $D$, where $D$ is the number of non-isomorphic node-anchored graphs of size no greater than $n$. The perfect order embeddings satisfy $z(G_Q) \preceq z(G)$ if and only if $G_Q$ is a subgraph of $G$.
\end{proposition}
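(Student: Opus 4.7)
The plan is to prove the proposition by an explicit construction: enumerate the non-isomorphic node-anchored graphs up to size $n$ and embed each graph by a vector of subgraph counts. Concretely, let $H_1, \ldots, H_D$ be a fixed enumeration of the $D$ isomorphism classes of node-anchored graphs of size at most $n$. For any node-anchored graph $G$ in the dataset, define
\begin{equation}
    z(G)_i = \#\{\text{node-anchored subgraphs of } G \text{ isomorphic to } H_i\},
\end{equation}
that is, the $i$-th coordinate counts how many times $H_i$ appears as an anchored subgraph inside $G$ in the sense of Definition~\ref{def:motif_freq}. This yields a map $z : \mathcal{G} \to \mathbb{R}^D$.

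For the forward ($\Rightarrow$) direction, suppose $G_Q$ is a node-anchored subgraph of $G$ via some injection $f : V_Q \to V$ respecting anchors and induced edges. For any fixed $H_i$ and any anchored embedding $g : V_{H_i} \to V_Q$ witnessing $H_i \preceq G_Q$, the composition $f \circ g$ is an anchored embedding witnessing $H_i \preceq G$. Since $f$ is injective and preserves the induced-edge structure, distinct witnesses in $G_Q$ map to distinct witnesses in $G$. Therefore $z(G_Q)_i \le z(G)_i$ for every $i$, and $z(G_Q) \preceq z(G)$ elementwise.

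For the reverse ($\Leftarrow$) direction, assume $z(G_Q) \preceq z(G)$. Because $|V_Q| \le n$, the graph $G_Q$ itself is isomorphic to some $H_j$ in the enumeration. The identity map on $V_Q$ is an anchored subgraph isomorphism from $H_j$ to $G_Q$, so $z(G_Q)_j \ge 1$. By the assumed inequality, $z(G)_j \ge 1$, which means there exists an anchored subgraph isomorphism from $H_j \cong G_Q$ into $G$; equivalently, $G_Q \preceq G$. The two directions together give the stated equivalence, while the embedding trivially has dimension $D$ by construction.

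The main technical care needed in writing this out is bookkeeping around the anchored setting: one must verify that the witnesses being composed in the forward direction preserve the anchor correspondence, and that the counting in each coordinate respects isomorphism of anchored graphs rather than unanchored ones. Everything else is routine, and the proof does not require a learned neural model — it only shows that \emph{some} function $z$ realizing the perfect order embedding exists, which is the existential claim the proposition asserts.
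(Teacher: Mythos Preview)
Your proposal is correct and follows essentially the same construction as the paper: enumerate the non-isomorphic node-anchored graphs up to size $n$, place the subgraph count of the $i$-th such graph in the $i$-th coordinate, then argue the forward direction by pushing witnesses through the embedding $f$ and the reverse direction by looking at the coordinate indexed by $G_Q$ itself. Your write-up is more careful than the paper's (you spell out injectivity of the witness map and the anchor bookkeeping), but the approach is identical.
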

\begin{proof}
\vspace{-3mm}
This perfect order embedding can be simply constructed by enumerating all possible non-isomorphic node-anchored graphs of size no greater than $n$, and place the count of the $i$-th graph into the $i$-th dimension of the order embedding.

If graph $G_Q$ is a subgraph of graph $G$, then by definition there exists an isomorphism $f$ between $G_Q$ and a subgraph of $G$. Therefore each motif in $G_Q$ can be mapped to the corresponding motif in $G$ by $f$. Hence the motif count of $G_Q$ for any motif is no less than the corresponding motif count of $G$.

Conversely, if graph $G_Q$ is not a subgraph of graph $G$, then the motif isomorphic to $G_Q$ has count at least $1$ for $G_Q$, and $0$ for graph $G$, violating the order constraint. 
\end{proof}

\subsection{Motif Counts Prediction}
To demonstrate the capability of GNNs in predicting motif counts, we randomly generate graphs using synthetic and real datasets \cite{KKMMN2016} and train the \name GNN to predict (log) counts of small motifs. We find that GNNs can accurately estimate these counts, with relative mean squared error of 12\% across all datasets. Together with Proposition 1, it confirms the capacity of \name to learn order embeddings that capture the subgraph relation.

\begin{table}[ht]	
\centering
\resizebox{0.9\linewidth}{!}{
\begin{tabular}{clccccc}\cmidrule[\heavyrulewidth]{2-7}
& \textbf{Dataset} &\multicolumn{1}{c}{\textsc{E-R}} & \multicolumn{1}{c}{\textsc{COX2}} & \multicolumn{1}{c}{\textsc{DD}} & \multicolumn{1}{c}{\textsc{\textsc{MSRC\_21}}} & \multicolumn{1}{c}{\textsc{FirstMMDB}}  
\\ 
\cmidrule{2-7}
& \textsc{MSE} & 8.4  & 7.1  &  8.9  &  10.5  & 12.7    \\ 
& \textsc{Rel Err} & 9.4\%  & 12.5\%  &  11.8\%  &  13.4\%  & 11.5\% \\ 
\cmidrule[\heavyrulewidth]{2-7}
\end{tabular}}
\caption{The MSE and relative MSE of log motif counts.}
\label{tab:all_match_AUROC}
\vspace{-8mm}
\end{table}

\section{Further Implementation Details}
\label{appendix:implement detail}
\xhdr{Real-world datasets}

Table \ref{tab:real_dataset_stats} shows the graph statistics of the real-world datasets used in our experiments. 

\begin{table}[ht]	
    \centering
    \resizebox{0.97\linewidth}{!}{
    \begin{tabular}{clccccc}\cmidrule[\heavyrulewidth]{2-5}
    & \textbf{Dataset} & Number of graphs & Number of nodes & Number of edges
    \\ 
    \cmidrule{2-5}
    & \textsc{ENZYMES} & 600 & 19.6K & 37.3K \\
    & \textsc{COX2} & 467 & 19.3K & 20.3K \\
    & \textsc{MSRC\_9} & 221 & 9.0K & 21.6K \\
    & \textsc{MNRoads} & 1 & 2.6K & 3.3K \\
    & \textsc{COIL-DEL} & 3900 & 84.0K & 211.5K \\
    
    \cmidrule{2-5}
    
    & \textsc{Plant-Star} & 1000 & 20K & 30.2K \\
    & \textsc{Plant-Clique} & 1000 & 20K & 66.2K \\
    & \textsc{Plant-Molecule} & 1000 & 20K & 32.2K \\
    & \textsc{Plant-Random} & 1000 & 20K & 49.3K \\
    
    \cmidrule[\heavyrulewidth]{2-5}
    \end{tabular}}
    \caption{Graph statistics of datasets used in experiments. The \textsc{Plant} datasets are those used in Experiment (2).}
    \label{tab:real_dataset_stats}
    \vspace{-8mm}
\end{table}
    
\xhdr{Synthetic dataset}
\label{appendix: synthetic dataset}
We design a synthetic graph generator to provide training graphs for synthetic pretraining, so that the model learns an order embedding space over diverse types of graphs and can generalize to new datasets at inference time.

The generator randomly chooses one of the following ways to generate a graph:
Erd\H{o}s-R\'enyi (E-R)~\cite{erdds1959random}, Extended Barabási–Albert graphs~\cite{albert2000topology}, Power Law Cluster graphs~\cite{holme2002growing} and Watts-Strogatz graph~\cite{watts1998collective}. 
For the Erd\H{o}s-R\'enyi generator, the probability of an edge is $p \sim \textrm{Beta}(1.3, 1.3n/\log_2(n) - 1.3)$. For the Extended Barabási–Albert generator, the number of attachment edges per node is $m \sim \textrm{Unif}(1, 2\log_2 n)$, the probability of adding edges is $p \sim \textrm{Exp}(20)$ (capped at $0.2$) and the probability of rewiring edges is $q \sim \textrm{Exp}(20)$ (capped at $0.2$). For the Power Law Cluster generator, the number of attachment edges per node is $m \sim \textrm{Unif}(1, 2\log_2 n)$ and the probability of adding a triangle after each edge is $p \sim \textrm{Unif}(0, 0.5)$. For the Watts-Strogatz generator, each node is connected to $k \sim n\textrm{Beta}(1.3, 1.3n/\log_2(n) - 1.3)$ neighbors (minimum $2$) and the rewiring probability is $p \sim \textrm{Beta}(2, 2)$. Here $n$ is the desired graph size.


\cut{
\xhdr{Dataset statistics}
We demonstrate that our synthetic data generation scheme is capable of generating graphs with high variety. Figure \ref{fig:syn-real-stats} shows the statistics of the synthetic graphs, compared to real-world datasets. In terms of graph statistics, including density, diameter, average shortest path length, and average clustering coefficient, the synthetic dataset (large blue dots) covers most of the real-world datasets, including those in the domains of chemistry (\textsc{COX2}), biology (\textsc{ENZYMES}) and social networks (\textsc{Reddit-Binary}) \cite{Fey/Lenssen/2019}.

The high coverage of statistics suggests that the synthetic dataset can serve well as a application-agnostic dataset that allows our model \name to learn order embedding models in the most generic sense. 
In practice, given a very small real-world dataset, one can train on this synthetic dataset, and transfer the model (either directly or with fine-tuning) to apply to this real-world dataset.
}

\xhdr{Training details} 
Our encoder model architecture consists of a single hidden-layer feedforward module, followed by 8 graph convolutions with ReLU activation and learned dense skip connections, followed by a 4-layer perceptron with 64-dimensional output. We use SAGE graph convolutions \cite{hamilton2017inductive} with sum aggregation and no neighbor sampling. We train the network with the Adam optimizer using learning rate $10^{-4}$ for 1 million batches of synthetic data, with batch size 64 and balanced class distribution.

\begin{table*}[t]
    \centering
\begin{tabular}{cc}
    \begin{subtable}[t!]{0.45\textwidth}
        \centering
        \resizebox{\linewidth}{!}{
        \begin{tabular}[t!]{clcccc}\cmidrule[\heavyrulewidth]{2-6}
        & \textbf{Dataset} &\multicolumn{1}{c}{\textsc{Enzymes}} & \multicolumn{1}{c}{\textsc{COX2}} & \multicolumn{1}{c}{\textsc{COIL-DEL}} & \multicolumn{1}{c}{\textsc{MNRoads}} 
        \\ 
        \cmidrule{2-6}
                    & \textsc{Rand-ESU} & 13:59 & 13:56 & 19:22 & 4:47 \\
                    & \textsc{MFinder} & 17:29 & 17:13 & 20:57 & 16:24 \\
                    & \name & 9:26 & 8:13 & 11:15 & 9:41 \\
        \cmidrule[\heavyrulewidth]{2-6}
        \end{tabular}}
        \caption{Runtime comparison with approximate methods}
    \end{subtable}
&
    \begin{subtable}[t!]{0.45\textwidth}
        \centering
        \resizebox{\linewidth}{!}{
        \begin{tabular}[t!]{clcccc}\cmidrule[\heavyrulewidth]{2-6}
        & \textbf{Dataset} &\multicolumn{1}{c}{\textsc{Enzymes}} & \multicolumn{1}{c}{\textsc{COX2}} & \multicolumn{1}{c}{\textsc{COIL-DEL}} & \multicolumn{1}{c}{\textsc{MNRoads}} 
        \\ 
        \cmidrule{2-6}
                    & \textsc{Rand-ESU} & 39235 & 29368 & 85149 & 7605 \\
                    & \textsc{MFinder} & 10000 & 10000 & 10000 & 10000 \\
                    & \name & 1500 & 1500 & 1500 & 1500 \\
        \cmidrule[\heavyrulewidth]{2-6}
        \end{tabular}}
        \caption{Average number of subgraphs sampled}
    \end{subtable}
    \end{tabular}
    \vspace{-2mm}
    \caption{Comparison of runtimes and number of subgraphs sampled (averaged over all motif sizes) for methods in Experiment \textbf{(3)}. \textsc{SPMiner} has the lowest runtimes and requires fewer samples to identify frequent motifs. We compare against the greedy variant of \name; the MCTS variant samples the same number of subgraphs and incurs small computational overhead.}
    \label{tab:runtime_comparison}
    \vspace{-7mm}
\end{table*}

\xhdr{Decoder configuration}
To sample node-anchored neighborhoods, we follow the iterative procedure of \textsc{MFinder}~\cite{kashtan2004efficient}: after picking a random node as the anchor, we maintain a search tree, in each step randomly picking a node from the frontier with probability weighted by its number of edges with nodes in the search tree. The procedure terminates when the search tree reaches $N$ nodes, and we take the subgraph induced by these nodes as the neighborhood. We select $N$ uniformly randomly from 20 to 29 for each neighborhood (except for Experiment \textbf{(2)}, where the maximum size is 25), sampling 10000 neighborhoods in total.

For a given maximum motif size $k$, \name generates all motifs up to size $k$ through a single run of the search procedure. For the greedy procedure, we sample 1000 seed nodes and expand each corresponding candidate motif up to size $k$, recording all candidates for intermediate sizes. For the MCTS procedure, we run a total of 1000 simulations, divided equally among each motif size up to $k$, reusing the existing search tree with each new iteration; this procedure creates a useful prior for each successive motif size. We use exploration constant $c=0.7$ throughout.

\xhdr{Baseline configuration}
For \textsc{MFinder} \cite{kashtan2004efficient}, we sample 10000 neighborhoods per motif size. We omit the slow $O(n^{n+1})$ exact probability correction, simply weighting each sampled neighborhood equally. We take the first sampled node as the anchor node in node-anchored experiments.

For \textsc{Rand-ESU} \cite{wernicke2006efficient}, we set the expansion probabilities for each search tree level $i$ according to $p_i = (1-i/(k+1))^\tau$ where $k$ is the maximum motif size and $\tau$ can be tuned for runtime depending on the dataset; we use $\tau=5$ for \textsc{Enzymes} (except $\tau=2.3$ in Experiment \textbf{(1)}), $\tau=2$ for \textsc{Cox2}, $\tau=2$ for \textsc{MNRoads} and $\tau=9$ for \textsc{Coil-Del}. We use the suggested variance reduction technique of sampling a fixed proportion of children to expand at each level.

For \textsc{Motivo} \cite{bressan2021faster}, we use the official cpp implementation with $10^7$ samples.


\section{Further Experimental Results}

\xhdr{Encoder Validation Details}

Table \ref{tab:encoder&ablation} evaluates the faithfulness of the order embedding space: given a pair of graphs, the task is to predict whether one is a subgraph of the other. We use accuracy and area under PR curve to evaluate the performance. \name achieves $95\%$ accuracy in determining whether one graph is a subgraph of the other.

\begin{table}[ht]
    \centering
    \vspace{-3mm}
    \resizebox{\linewidth}{!}{
        \begin{tabular}{ccccccccc}\cmidrule[\heavyrulewidth]{1-9}
            & \multicolumn{2}{c}{Synthetic} & \multicolumn{2}{c}{\textsc{Enzymes}} & \multicolumn{2}{c}{\textsc{COX2}} & \multicolumn{2}{c}{\textsc{FirstMMDB}} \\
            & Acc & AP & Acc & AP & Acc & AP & Acc & AP\\
            \cmidrule{1-9}
            MLP (SAGE) & \textbf{96.3} & 43.9 & 96.2 & 40.2 & 91.3 & 45.8 & 94.7 & 38.8 \\  
            MLP (GIN) & 96.1 & 41.0 & 95.4 & 34.8 & 89.0 & 37.3 & 94.7 & 35.9 \\  
            MLP (GCN) & 95.8 & 37.2 & 94.7 & 31.5 & 91.6 & 47.2 & 94.3 & 34.1  \\  
            Order (No skip) & 95.9 & 45.7 & 96.5 & 57.3 & 92.1 & 70.5 & 95.1 & 53.4 \\  
            \cmidrule{1-9}
            Order (Full) & 96.2 & \textbf{46.8} & \textbf{96.6} & \textbf{60.9} & \textbf{92.6} & \textbf{71.2} & \textbf{95.6} & \textbf{59.9} \\
        \cmidrule[\heavyrulewidth]{1-9}
    \end{tabular}
    }
    \caption{Accuracy and area under PR curve performance with different GNN architectures}
    \vspace{-8mm}
    \label{tab:encoder&ablation}
\end{table}
We further conduct ablation studies. A random model that outputs labels according to the class distribution would receive 3.5 AUPR in our task. 
The following architectures are considered:
(1) GCN+MLP: uses MLP with cross entropy loss to replace order embedding; uses the GCN \cite{kipf2016semi} architecture;
(2) GIN+MLP: same as (1) but with GIN \cite{xu2018representation} architecture;
(3) SAGE+MLP: same as (1) but with SAGE \cite{hamilton2017inductive} architecture\footnote{We use the SAGE architecture with sum aggregation, but without neighbor sampling.};
(4) No skip: same as \name (order embedding loss and the SAGE \cite{hamilton2017inductive} architecture), but does not use the proposed learnable skip layer.

Results in Table \ref{tab:encoder&ablation} demonstrate that both order embedding and the learnable skip layer are crucial to performance gains, together reaching over 60 AUPR on real-world datasets.


\xhdr{Frequency comparison of identified motifs}
We additionally run Experiment \textbf{(3)} on three more datasets, \textsc{Coil-Del}~\cite{riesen2008iam}, a dataset of 3D objects derived from COIL-100, a graph of the Minnesota road network ~\cite{netrepo}, abbreviated \textsc{MNRoads} and the Arxiv dataset \cite{hu2020ogb} with 100K nodes and 1M edges to demonstrate that \name can efficiently process larger graphs. We find again that \name\ consistently identifies more frequent motifs than the baselines, especially for large motifs (Figure \ref{fig:extra_freq_comparison}).
\begin{figure}
    \centering
    \includegraphics[width=0.18\textwidth]{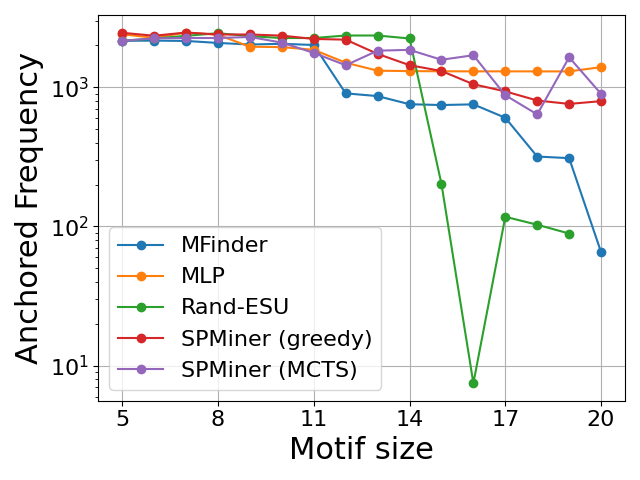}
    \hspace{1cm}
    \includegraphics[width=0.18\textwidth]{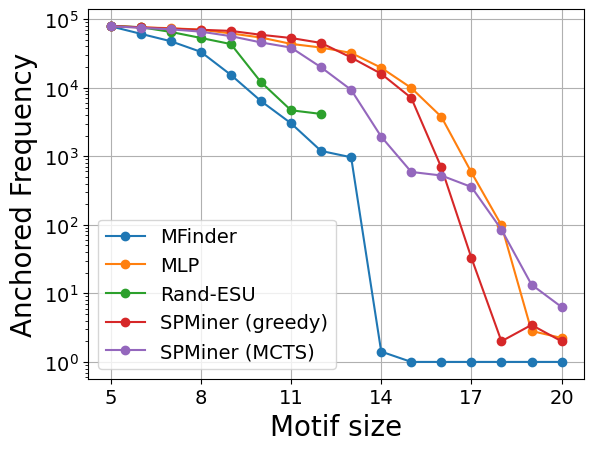}
    \vspace{-4mm}
    \caption{Median frequencies of motifs identified by \name and baselines for each graph size; higher is better. Top: \textsc{MNRoads}; bottom: \textsc{Coil-Del}.}
    \label{fig:extra_freq_comparison}
    \vspace{-2mm}
\end{figure}

Table \ref{tab:arxiv_freqs} shows the node-anchored frequency of motifs identified by \name and MFinder on the larger Arxiv dataset. Due to the computational expense of getting ground-truth frequencies for the identified motifs, we estimate these frequencies by anchoring the target graph at randomly sampled anchor points, and only checking subgraph isomorphism between the query and target anchored at the sampled points. \name identifies more frequent motifs in the regime of larger motif sizes.

\begin{table}[ht]
    \centering
    \resizebox{0.60\linewidth}{!}{
    \begin{tabular}{c|c|c|c}
        \toprule
        \textbf{Method} & Size 8 & Size 9 & Size 10 \\
        \hline
        \name & 155.1K & \textbf{153.8K} & \textbf{151.0K}  \\
        MFinder & \textbf{156.4K} & 152.3K & 146.3K \\
        \bottomrule
    \end{tabular}
    }
    \caption{Median frequencies of motifs identified by \name and baselines for each graph size.}
    \label{tab:arxiv_freqs}
    \vspace{-7mm}
\end{table}

\section{Runtime Comparison}
\label{appendix: runtime}
\name uses a pre-training stage to train the encoder, and a search stage to identify frequent subgraphs (inference stage).
The pre-training stage takes 8 hours, which is only performed once for all datasets in the experiment section (see dataset statistics). We further perform runtime comparison at inference stage, compared to other baselines.
\xhdr{Runtime comparison with approximate methods}
We tune the hyperparameters so that all methods are comparable in runtime and sample a comparable number of subgraphs. All methods are single-process and implemented with Python; the neural methods are implemented with PyTorch Geometric~\cite{Fey/Lenssen/2019}. We run all methods on a single Xeon Gold 6148 core; the neural methods additionally use a single Nvidia 2080 Ti RTX GPU. Table \ref{tab:runtime_comparison} shows that \name\ runs in less time and requires fewer subgraph samples than the baselines.

\xhdr{Runtime comparison with exact methods}
In experiments, we compare with exact methods gSpan~\cite{yan2002gspan}, Gaston~\cite{nijssen2005gaston} and Motivo~\cite{bressan2021faster}. For gSpan, we use a highly optimized C++ implementation\footnote{\url{https://github.com/Jokeren/gBolt}}, which we modify to prune the search once the identified motif reaches the specified maximum motif size, in order to increase its efficiency in our setting. We use the single-threaded version. 
For Gaston, we use the official C++ implementation\footnote{\url{http://liacs.leidenuniv.nl/~nijssensgr/gaston/download.html}}, using the variant with occurrence lists and specifying the desired maximum motif size through a command-line argument. 
For Motivo, we use the official C++ implementation\footnote{\url{https://gitlab.com/steven3k/motivo/}}, using 4 threads and $10^7$ samples through command-line arguments.
We impose a resource limit of two hours of runtime and 50GB of memory for both methods. We run both methods on a single Xeon Gold 6148 core.

\end{document}